\def\BibTeX{{\rm B\kern-.05em{\sc i\kern-.025em b}\kern-.08em
    T\kern-.1667em\lower.7ex\hbox{E}\kern-.125emX}}
\newcommand{\R}{{\mathbb{R}}}
\newcommand{\one}{{\mathbbm{1}}}
\newcommand{\ball}{{\mathrm{B}}}
\newcommand{\fdot}[2]{\langle #1,#2\rangle}
\newtheorem{theorem}{Theorem}
\newtheorem{lemma}{Lemma}
\newtheorem{corollary}{Corollary}
\newcommand\blfootnote[1]{%
  \begingroup
  \renewcommand\thefootnote{}\footnote{#1}%
  \endgroup
}
\begin{document}

\title{Bandit Multiclass Linear Classification for the Group Linear
  Separable Case\blfootnote{This work is first published in iSAI-NLP
    2019, Chiang Mai, Thailand.  This work is supported by the
    Thailand Research Fund, Grant RSA-6180074.  }}



\author{Jittat Fakcharoenphol\footnote{Department of Computer Engineering, Kasetsart University, Bangkok, Thailand. E-mail: jittat@gmail.com.
  }
  \and
  Chayutpong Prompak\footnote{Department of Computer Engineering, Kasetsart University, Bangkok, Thailand. E-mail: chay.promp@gmail.com}}



\maketitle

\begin{abstract}
  We consider the online multiclass linear classification under the
  bandit feedback setting.  Beygelzimer, P\'{a}l, Sz\"{o}r\'{e}nyi,
  Thiruvenkatachari, Wei, and Zhang [ICML'19] considered two notions
  of linear separability, weak and strong linear separability.  When
  examples are strongly linearly separable with margin $\gamma$, they
  presented an algorithm based on {\sc Multiclass Perceptron} with
  mistake bound $O(K/\gamma^2)$, where $K$ is the number of classes.
  They employed rational kernel to deal with examples under the weakly
  linearly separable condition, and obtained the mistake bound of
  $\min(K\cdot 2^{\tilde{O}(K\log^2(1/\gamma))},K\cdot
  2^{\tilde{O}(\sqrt{1/\gamma}\log K)})$.  In this paper, we refine
  the notion of weak linear separability to support the notion of
  class grouping, called group weak linear separable condition.  This
  situation may arise from the fact that class structures contain
  inherent grouping.  We show that under this condition, we can also
  use the rational kernel and obtain the mistake bound of $K\cdot
  2^{\tilde{O}(\sqrt{1/\gamma}\log L)})$, where $L\leq K$ represents
  the number of groups.
\end{abstract}



\section{Introduction}

In an online-learning paradigm, at each time step $t$, the learner
receives a feature vector $x_t$, makes a prediction $\hat{y}_t$, and
obtains a feedback.  Note that the learner is playing against an
adversary who picks the vector $x_t$ and the correct class $y_t$ from
a set of $K$ classes.  In the standard {\em full-information feedback
  setting}, the feedback is the correct class $y_t$, while in the {\em
  bandit feedback setting}, the only feedback is a binary indicator
specifying if the learner makes the correct prediction, i.e.,
$\one[\hat{y}_t=y_t]$.  The performance of the learner is measured by
the total number of mistakes over all the steps.

Typically, the theoretical analysis is carried out under particular
linear separability with margin assumptions.  Beygelzimer, P\'{a}l,
Sz\"{o}r\'{e}nyi, Thiruvenkatachari, Wei, and
Zhang~\cite{BeygelzimerPSTWZ2019-separable} introduced two definitions
of linear separability, called {\em strong} and {\em weak} linear
separability.  We give a brief summary here (see formal definitions in
Section~\ref{section:def-linear-sep}).  For both definitions, there
are $K$ vectors $w_i$ defining $K$ hyperplanes.  The weak linear
separable condition which is similar to standard multiclass linear
separability defined in Crammer and
Singer~\cite{CrammerS2003-ultraconservative} ensures that examples
from each class lie in the intersection of $K$ halfspaces induced by
these hyperplanes.  The strong linear separable condition requires
that each class is separated by a single hyperplane.

In the full-information feedback setting, Crammer and
Singer~\cite{CrammerS2003-ultraconservative} showed that if all
examples are weakly linear separable with margin $\gamma$ and have
norm at most $R$, the {\sc Multiclass Perceptron} algorithm makes at
most $\lfloor 2(R/\gamma)^2\rfloor$ mistakes.  This is tight (up to a
constant) since any algorithms must make at least $\frac{1}{2}\lfloor
(R/\gamma)^2\rfloor$ mistakes in the worst case.

For the bandit feedback
setting~\cite{KakadeST2008-bandit-online-multiclass}, Beygelzimer~{\em
  et al.}~\cite{BeygelzimerPSTWZ2019-separable} presented an algorithm
that make at most $O(K(R/\gamma)^2)$ if the examples are strongly
linear separable with margin $\gamma$, paying the price of a factor of
$K$ for the bandit feedback setting.  They also showed how to extend
the algorithm to work with weakly linear separable case using the
kernel approach.  More specifically, they showed that the examples can
be (non-linearly) transformed to higher dimensional space
so that they are strongly linear separable with margin
$\gamma'$ (which depends only on $\gamma$ and $K$).

In this paper, we introduce a more refined linear separability
condition.  Intuitively, the set of weight vectors $w_i$ represents
the ``directions'' of the examples.  In this paper, we are interested
in the cases where these directions collapse, i.e., while there are
$K$ classes of examples, the number of distinct weight vectors
required to linearly separate them is less than $K$.  This situation
may arise from the fact that class structures contain inherent
grouping where intra-group classes can be separated with a single
weight vector (or direction).  (See Fig.~\ref{fig:sep-examples}, for
example.)

More specifically, we consider the case where the classes can be
partitioned into $L$ groups, where $L\leq K$, such that (1) examples
from any two classes in the same group are linearly separable with a
margin with a single weight vector, and (2) examples from two classes
under different groups are weakly linear separable with a margin.  We
refer to this condition as the {\em group weakly linear separable
  condition}.

We show that under this refined condition, the same kernel as
in~\cite{BeygelzimerPSTWZ2019-separable} can also be used so that the
algorithm works in the space where there is (strong) margin $\gamma'$
that depends on $L$.  Our proofs, as well as that
of~\cite{BeygelzimerPSTWZ2019-separable}, use the ideas from Klivans
and Servedio~\cite{KlivansS2004-halfspaces-margin} (which is also
based on Beigel~{\em et al.}~\cite{BeigelRS1995}).

We note that our key contribution is the mathematical analysis of the
margin for group weakly linearly separable examples for the kernelized
algorithm in Beygelzimer~{\em et al.}.  This means that everything in
their paper works under this group condition (with a better margin
bound that depends on $L$ not $K$).

Section~\ref{sect:def} gives definitions and problem settings.  Our
main result is in Section~\ref{sect:main}.  In particular,
Section~\ref{sect:sep-poly} contains our technical theorem that
establishes the margin under the transformed inner product space.  We
provide small examples in Section~\ref{sect:exp}.

\section{Definitions and problem settings}
\label{sect:def}

In this section, we review various definitions of linear separability
and state a new group weakly linear separable condition, the focus of
this work.  We also provide a quick review of kernel methods and the
{\sc Kernelized Bandit Algorithm} algorithm used by Beygelzimer~{\em
  et. al.}~\cite{BeygelzimerPSTWZ2019-separable}.

\subsection{Linear separability}
\label{section:def-linear-sep}

We restate the definitions for strong and weak linear separability by Beygelzimer~{\em et. al.}~\cite{BeygelzimerPSTWZ2019-separable} here.  
We use the common notation that $[K]=\{1,2,\ldots,K\}$.

The examples lie in an inner product space $(V,\langle\cdot,\cdot\rangle)$.  Let $K$ be the number of classes and let $\gamma$ be a positive real number.
Labeled examples
\[
(x_1,y_1),(x_2,y_2),\ldots,(x_T,y_T)\in V\times[K]
\]
are {\em strongly linear separable with margin $\gamma$} if there exist vectors $w_1,w_2,\ldots,w_K\in V$ such that
for all $t\in[T]$,
\[
\langle x_t, w_{y_t}\rangle \geq \gamma/2,
\]
and
\[
\langle x_t, w_i\rangle \leq -\gamma/2,
\]
for $i\in [K]\setminus \{y_t\}$,
and $\sum_{i=1}^K \Vert w_i \Vert^2\leq 1$.

On the other hand, the labeled examples are {\em weakly linear separable with margin $\gamma$} if there exist vectors $w_1,w_2,\ldots,w_K\in V$ such that
for all $t\in[T]$,
\[
\langle x_t, w_{y_t}\rangle \geq \langle x_t, w_i\rangle + \gamma,
\]
for $i\in [K]\setminus \{y_t\}$, and $\sum_{i=1}^K \Vert w_i \Vert^2\leq 1$.

The strong linear separability also appears in Chen~{\em et
  al.}~\cite{ChenCZCZ2009-beyond-banditron}.  The weak linear
separable condition appears in Crammer and
Singer~\cite{CrammerS2003-ultraconservative}.

We now define group weakly linear separability.
Let ${\mathcal G}=\{G_1,G_2,\ldots,G_L\}$ be a partition of $[K]$, i.e., $G_i\subseteq [K]$ for all $i$,
$G_i\cap G_j=\emptyset$ for $i\neq j$, and $\bigcup G_i = [K]$.  
Let $g:[K] \rightarrow [L]$ be a mapping function such that $g(i)\mapsto j$ iff $i\in G_j$.
We say that the labeled examples
\[
(x_1,y_1),(x_2,y_2),\ldots,(x_T,y_T)\in V\times[K]
\]
are {\em group weakly linear separable with margin $\gamma$ under ${\mathcal G}$} 
if 
\begin{enumerate}
\item there exist vectors $u_1,u_2,\ldots,u_L\in V$ such that
$\sum_{i=1}^L \Vert u_i \Vert^2\leq 1$, and, for all $t\in[T]$, 
\[
\langle x_t, u_{g(y_t)}\rangle \geq \langle x_t, u_p\rangle + \gamma,
\]
for all $p\in [L] \setminus\{g(y_t)\}$, 
\item there exist vectors $u'_1,u'_2,\ldots,u'_L\in V$ such that
$\sum_{i=1}^L \Vert u'_i \Vert^2\leq 1$, and, for all $t\in[T], t'\in[T]$ such that $y_t\neq y_{t'}$ and $g(y_t)=g(y_{t'})$,
either
\[
\langle x_t, u'_{g(y_t)}\rangle \geq \langle x_{t'}, u'_{g(y_t)}\rangle + 2\gamma,
\]
or
\[
\langle x_t, u'_{g(y_t)}\rangle \leq \langle x_{t'}, u'_{g(y_t)}\rangle - 2\gamma.
\]
\end{enumerate}
Note that vectors $u_i$'s define inter-group hyperplanes, while each
$u'_i$ defines intra-group boundaries.  Also note that, to simplify
our proofs, the ``margin'' between intra-group classes is $2\gamma$;
this would create the $+\gamma$ and $-\gamma$ gaps that already exist
between groups.

To illustrate the idea, Fig.~\ref{fig:sep-examples} shows 3 sets of examples.  

\begin{figure}[h]
\centering
\includegraphics[width=0.45\textwidth]{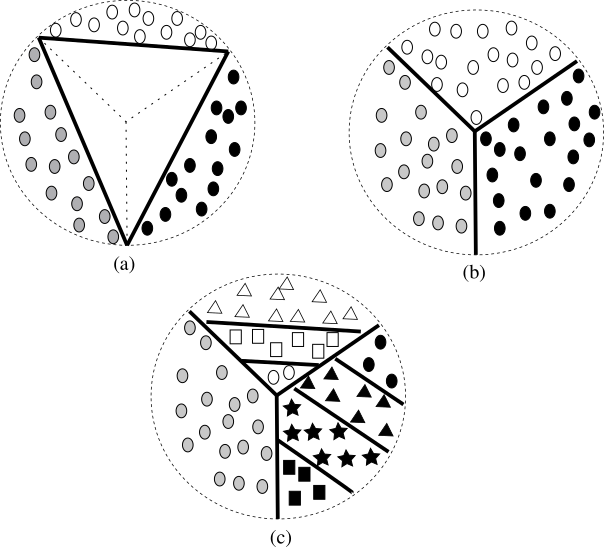}
\caption{Three set of examples in $\R^2$ showing different linear separable conditions. Thick lines represent class boundaries. (a) Strongly linear separable examples with 3 classes (linearly separable in $\R^3$).  (b) Weakly linear separable examples with 3 classes.  (c) Group weakly linear separable examples with 3 groups; group 1 (white) contains 3 classes, group 2 (black) contains 4 classes, and group 3 (gray) contains 1 class.}
\label{fig:sep-examples}
\end{figure}

\subsection{Kernel methods}

We give an overview of the kernel methods (see~\cite{ShaweTaylorC2004-kernel-methods} for expositions) and the rational kernel~\cite{ShalevShwartzSS2011-kernel-based}.

The kernel method is a standard approach to extend linear classification algorithms that use only inner products to handle the notions of ``distance'' between pairs of examples to nonlinear classification.
A {\em positive definite kernel} (or {\em kernel}) is a function of the form $k: X\times X\rightarrow \R$ for some set $X$ such that the matrix $[k(x_i,x_j)]_{i,j=1}^m$ is symmetric positive definite for any set of $m$ examples $x_1,x_2,\ldots,x_m\in X$.  It is known that for every kernel $k$, there exists some inner product space $(V,\fdot{\cdot}{\cdot})$ and a feature map $\phi:X\rightarrow V$ such that $k(x,x')=\fdot{\phi(x)}{\phi(x')}$.  Therefore, a linear learning algorithm can essentially non-linearly map every example into $V$ and work in $V$ instead of the original space without explicitly working with $\phi$ using $k$.  This can be very helpful when the dimension of $V$ is infinite.

As in Beygelzimer~{\em et al.}~\cite{BeygelzimerPSTWZ2019-separable}, we use the rational kernel.  Assume that examples are in $\R^d$.  Denote by $\ball(0,1)$ a unit ball centered at $0$ in $\R^d$.
The {\em rational kernel} $k:B(0,1)\times B(0,1)\rightarrow\R$ is defined as
\[
k(x,x')=\frac{1}{1-\frac{1}{2}\langle x,x'\rangle_{\R^d}}.
\]
Given $x,x'\in\R^d$, $k(x,x')$ can be computed in $O(d)$ time.

Let $\ell_2=\{x\in\R^{\infty} : \sum_{i=1}^\infty x_i^2 < +\infty \}$ be the classical real separable Hilbert space equipped with the standard inner product $\fdot{x}{x'}_{\ell_2}=\sum_{i=1}^\infty x_i x'_i$.  We can index the coordinates of $\ell_2$ by $d$-tuples $(\alpha_1,\alpha_2,\ldots,\alpha_d)$ of non-negative integers, the associated feature map $\phi:\ball(0,1)\rightarrow\ell_2$ to $k$ is defined as
\begin{multline}
\left(\phi(x_1,x_2,\ldots,x_d)\right)_{(\alpha_1,\alpha_2,\ldots,\alpha_d)} =
x_1^{\alpha_1}x_2^{\alpha_2}\cdots x_d^{\alpha_d}
\cdot
\sqrt{2^{-(\alpha_1+\alpha_2+\cdots+\alpha_d)}{{\alpha_1+\alpha_2+\cdots+\alpha_d}\choose{\alpha_1,\alpha_2,\ldots,\alpha_d}}},
\label{eqn:phi}
\end{multline}
where
${{\alpha_1+\alpha_2+\cdots+\alpha_d}\choose{\alpha_1,\alpha_2,\ldots,\alpha_d}}
=
\frac{(\alpha_1+\alpha_2+\cdots+\alpha_d)!}{\alpha_1!\alpha_2!\cdots\alpha_d!}$
is the multinomial coefficient.  It can be verified that $k$ is the
kernel with its feature map $\phi$ to $\ell_2$ and for any
$x\in\ball(0,1)$, $\phi(x)\in\ell_2$.

\subsection{Multiclass Linear Classification}

Beygelzimer~{\em et al.}~\cite{BeygelzimerPSTWZ2019-separable}
presented a learning algorithm for the strongly linearly separable
examples using $K$ copies of the {\sc Binary Perceptron}.  They
obtained a mistake bound of $O(K(R/\gamma)^2)$ when the examples are
from $\R^d$ with maximum norm $R$ with margin $\gamma$.

Their approach for dealing the weakly linear separable case is to use
the kernel method.  They introduced the {\sc Kernelized Bandit
  Algorithm} (Algorithm~\ref{alg:kernel-bandit}) and proved the
following theorem.

\begin{algorithm}
  \SetAlgoLined
  \DontPrintSemicolon
  \KwData{Number of classes $K$, number of rounds $T$}
  \KwData{Kernel function $k(\cdot,\cdot)$}
  \Begin{
      Initialize $J_1^{(1)}=J_2^{(2)}=\cdots=J_k^{(k)}=\emptyset$\;
      \For{$t=1,2,\ldots,T$}{
        Observe feature vector $x_t$\;
        Compute
        $S_t=\left\{i : 1\leq i\leq K, \sum_{(x,y)\in J_i^{(t)}} yk(x,x_t)\geq 0\right\}$\;
        \eIf{$S_t=\emptyset$}{
          Predict $\hat{y}_t\sim \mbox{Uniform}(\{1,2,\ldots,K\})$\;
          Observe feedback $z_t=\one[\hat{y}_t\neq y_t]$\;
          \eIf{$z_t=1$}{
            Set $J_i^{(t+1)} = J_i^{(t)}$ for all $i\in\{1,2,\ldots,K\}$
          }{
            Set $J_i^{(t+1)} = J_i^{(t)}$ for all $i\in\{1,2,\ldots,K\}\setminus\{\hat{y}_t\}$\;
            Update $J_{\hat{y}_t}^{(t+1)}=J_{\hat{y}_t}^{(t)}\cup\{(x_t,+1)\}$
          }
        }{
          Predict $\hat{y}_t\in S_t$ chosen arbitrarily\;
          Observe feedback $z_t=\one[\hat{y}_t\neq y_t]$\;
          \eIf{$z_t=1$}{
            Set $J_i^{(t+1)} = J_i^{(t)}$ for all $i\in\{1,2,\ldots,K\}\setminus\{\hat{y}_t\}$\;
            Update $J_{\hat{y}_t}^{(t+1)}=J_{\hat{y}_t}^{(t)}\cup\{(x_t,-1)\}$
          }{
            Set $J_i^{(t+1)} = J_i^{(t)}$ for all $i\in\{1,2,\ldots,K\}$
          }
        }
      }
    }
  \caption{{\sc Kernelized Bandit
      Algorithm}\label{alg:kernel-bandit}~\cite{BeygelzimerPSTWZ2019-separable}}
\end{algorithm}

\begin{theorem}[Theorem 4 from~\cite{BeygelzimerPSTWZ2019-separable}]
  Let $X$ be a non-empty set, let $(V,\fdot{\cdot}{\cdot})$ be an
  inner product space.  Let $\phi:X\rightarrow V$ be a feature map and
  let $k:X\times X\rightarrow\R$, where
  $k(x,x')=\fdot{\phi(x)}{\phi(x')}$, be the kernel.  If
  $(x_1,y_1),(x_2,y_2),\ldots,(x_T,y_T)\in X\times\{1,2,\ldots,K\}$
  are labeled examples such that
  \begin{enumerate}
  \item the mapped examples $(\phi(x_1),y_1),\ldots,(\phi(x_T),y_T)$
    are strongly linearly separable with margin $\gamma$,
  \item $k(x_1,x_1),k(x_2,x_2),\ldots,k(x_T,x_T)\leq R^2$
  \end{enumerate}
  then the expected number of mistakes that the {\sc Kernelized Bandit
    Algorithm} makes is at most $(K-1)\lfloor 4(R/\gamma)^2 \rfloor$.
  \label{thm:kernel-bandit-mistake-bound}
\end{theorem}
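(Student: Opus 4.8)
The plan is a Perceptron-style potential argument applied to the $K$ implicit weight vectors that Algorithm~\ref{alg:kernel-bandit} maintains, with the crucial twist that the potential is aggregated over all $K$ classes. First I would move to the feature space: for each class $i$ and round $t$ set $v_i^{(t)}=\sum_{(x,\sigma)\in J_i^{(t)}}\sigma\,\phi(x)\in V$, so that $\fdot{v_i^{(t)}}{\phi(x_t)}=\sum_{(x,y)\in J_i^{(t)}}y\,k(x,x_t)$ is exactly the score the algorithm thresholds when forming $S_t$. Every modification of some $J_i$ appends a pair $(x_t,\sigma)$, i.e.\ performs $v_i^{(t+1)}=v_i^{(t)}+\sigma\,\phi(x_t)$; call this an \emph{update to class $i$}. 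I would verify, by inspecting the two branches of the algorithm, that each update is performed on a ``mistake'' for class $i$, i.e.\ $\sigma\fdot{v_i^{(t)}}{\phi(x_t)}\le 0$: a positive update ($\sigma=+1$) happens only when $S_t=\emptyset$, so $\fdot{v_i^{(t)}}{\phi(x_t)}<0$; a negative update ($\sigma=-1$) happens only when $i=\hat y_t\in S_t$, so $\fdot{v_i^{(t)}}{\phi(x_t)}\ge 0$.

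Next, let $w_1,\dots,w_K$ witness strong linear separability of the mapped examples, so $\fdot{\phi(x_t)}{w_{y_t}}\ge\gamma/2$, $\fdot{\phi(x_t)}{w_i}\le-\gamma/2$ for $i\ne y_t$, and $\sum_i\Vert w_i\Vert^2\le 1$. For a single update to class $i$ on round $t$ I would record two effects: (a) $\fdot{v_i}{w_i}$ increases by $\sigma\fdot{\phi(x_t)}{w_i}$, which is $\ge\gamma/2$ in both cases (if $\sigma=+1$ then $y_t=i$ and this is $\fdot{\phi(x_t)}{w_{y_t}}$; if $\sigma=-1$ then $y_t\ne i$ and this is $-\fdot{\phi(x_t)}{w_i}$); and (b) $\Vert v_i\Vert^2$ increases by $2\sigma\fdot{v_i}{\phi(x_t)}+\Vert\phi(x_t)\Vert^2\le\Vert\phi(x_t)\Vert^2=k(x_t,x_t)\le R^2$, using that the update is on a mistake. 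Let $U$ be the total number of updates during the run, over all classes. Summing (a) and (b) over all of them gives, at the end, $\sum_i\fdot{v_i}{w_i}\ge U\gamma/2$ and $\sum_i\Vert v_i\Vert^2\le UR^2$; applying Cauchy--Schwarz to the tuples $(v_1,\dots,v_K)$ and $(w_1,\dots,w_K)$ in $V^K$ yields $U\gamma/2\le\sum_i\fdot{v_i}{w_i}\le\sqrt{\sum_i\Vert v_i\Vert^2}\,\sqrt{\sum_i\Vert w_i\Vert^2}\le R\sqrt U$, hence $U\le 4(R/\gamma)^2$ and, as $U$ is an integer, $U\le\lfloor 4(R/\gamma)^2\rfloor$. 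This aggregated bound is the heart of the argument: it is crucial that $\sum_i\Vert w_i\Vert^2\le 1$ be used collectively, so that the \emph{total} update count obeys a single-Perceptron bound rather than $K$ times that.

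Finally I would charge mistakes to updates by splitting the erring rounds on whether $S_t=\emptyset$. If $S_t\ne\emptyset$, an error means $\hat y_t\ne y_t$, which is precisely the condition triggering a negative update, so on these rounds the number of errors equals the number $N$ of negative updates. If $S_t=\emptyset$, then conditioning on the history through round $t-1$ together with $x_t,y_t$, the set $S_t$ is determined and $\hat y_t$ is uniform on $[K]$; with probability $1/K$ the guess is correct and a positive update occurs, and with probability $(K-1)/K$ it is wrong and a mistake is recorded with no update. Since positive updates occur only on such rounds, taking expectations and summing over $t$ gives $\mathbb{E}[\#\text{mistakes on }S_t=\emptyset\text{ rounds}]=(K-1)\,\mathbb{E}[P]$, where $P$ is the number of positive updates. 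Therefore $\mathbb{E}[\#\text{mistakes}]=(K-1)\mathbb{E}[P]+\mathbb{E}[N]\le(K-1)\,\mathbb{E}[P+N]=(K-1)\,\mathbb{E}[U]\le(K-1)\lfloor 4(R/\gamma)^2\rfloor$, using $K\ge 2$ (the case $K=1$ is trivial, with no mistakes possible). The main obstacle is noticing the aggregated-potential step, since the tempting per-class bound of $\lfloor 4(R/\gamma)^2\rfloor$ updates each would lose the needed factor; the accounting with the random guesses is routine once one conditions on the history at the start of each round.
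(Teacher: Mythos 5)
Your argument is correct: the aggregated Perceptron potential over all $K$ implicit weight vectors (using $\sum_i\Vert w_i\Vert^2\le 1$ collectively, together with the observation that every update is made on a round where the updated class's score has the wrong sign), followed by the $(K-1)$-to-$1$ charging of mistakes to updates via conditioning on the history when $S_t=\emptyset$, is exactly the proof given for this theorem in Beygelzimer et al., which the present paper imports without reproducing. No gaps; the only points worth stating explicitly (and you did) are that $U$ is an integer, that $K\ge 2$ is needed for $\mathbb{E}[N]\le(K-1)\mathbb{E}[N]$, and that $\hat y_t$ is uniform and independent of $y_t$ conditionally on the history.
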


The key theorem for establishing the mistake bound is the following
margin transformation theorem based on the rational kernel.

\begin{theorem}[Theorem 5 from~\cite{BeygelzimerPSTWZ2019-separable}]
  (Margin transformation from~\cite{BeygelzimerPSTWZ2019-separable}). Let
  $(x_1,y_1),(x_2,y_2),\ldots,(x_T,y_T)\in \ball(0,1)\times [K]$ be a
  sequence of labeled examples that is weakly linear separable with
  margin $\gamma >0$.  Let $\phi$ defined as in (\ref{eqn:phi}) let
  \[
  \gamma_1 = \frac{
    \left[376\lceil\log_2(2K-2)\rceil\cdot\left\lceil\sqrt{\frac{2}{\gamma}}\right\rceil\right]^{
      \frac{-\lceil\log_2(2K-2)\rceil\cdot
      \left\lceil\sqrt{2/\gamma}\right\rceil}{2}}
  }{2\sqrt{K}},
  \]
  \[
  \gamma_2 =
  \frac{
    \left(2^{s+1}r(K-1)(4s+2)\right)^{-(s+1/2)r(K-1)}
  }{
    4\sqrt{K}(4K-5)2^{K-1}
  }
  \]
  where $r=2\lceil\frac{1}{4}\log_2(4K-3)\rceil+1$ and
  $s=\lceil\log_2(2/\gamma)\rceil$.  Then the feature map $\phi$ makes
  the sequence $(\phi (x_1),y_1),(\phi (x_2),y_2),\ldots,(\phi
  (x_T),y_T)$ strongly linearly separable with margin
  $\gamma'=\max\{\gamma_1,\gamma_2\}$.  Also for all $t$,
  $k(x_t,x_t)\leq 2$.
\end{theorem}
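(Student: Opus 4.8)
The plan is to reduce the claim to constructing, for every class $i\in[K]$, an explicit polynomial $P_i$ in the coordinates of $x$ that has a constant sign-margin on all examples and a controlled ``rational-kernel norm''. Expanding the geometric series, $k(x,x')=\sum_{n\ge0}2^{-n}\langle x,x'\rangle^n=\sum_\alpha\bigl(2^{-|\alpha|}\binom{|\alpha|}{\alpha}\bigr)x^\alpha(x')^\alpha$, so a polynomial $P(x)=\sum_\alpha c_\alpha x^\alpha$ is realized as $P(x)=\langle\phi(x),W\rangle$ by the vector $W\in\ell_2$ with $W_\alpha=c_\alpha/\sqrt{2^{-|\alpha|}\binom{|\alpha|}{\alpha}}$, whose squared norm is $\|W\|^2=\sum_\alpha c_\alpha^2\,2^{|\alpha|}/\binom{|\alpha|}{\alpha}$. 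Hence it suffices to produce $P_1,\dots,P_K$ with $P_{y_t}(x_t)\ge\tfrac14$ and $P_i(x_t)\le-\tfrac14$ for $i\ne y_t$: dividing every $W_i$ by $(\sum_i\|W_i\|^2)^{1/2}$ makes $\sum_i\|W_i\|^2\le1$ and leaves a strong margin $\gamma'=\tfrac12(\sum_i\|W_i\|^2)^{-1/2}$. The last assertion is immediate, since $k(x_t,x_t)=1/(1-\tfrac12\|x_t\|^2)\le2$ on $\ball(0,1)$.

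To build the $P_i$, I would set $v_{ij}=w_i-w_j$ from the weak-separability witnesses $w_1,\dots,w_K$. Then $\|v_{ij}\|\le\sqrt2$ by the triangle inequality together with $\sum_i\|w_i\|^2\le1$, so $|\langle x,v_{ij}\rangle|\le\sqrt2$ on $\ball(0,1)$ by Cauchy--Schwarz, while the margin hypothesis says precisely that $\langle x_t,v_{ij}\rangle\ge\gamma$ for every $j\ne i$ when $y_t=i$, and $\langle x_t,v_{i,y_t}\rangle\le-\gamma$ when $y_t\ne i$. Thus ``$x$ has class $i$'' is the conjunction over $j\ne i$ of the margin-$\gamma$ halfspaces $\mathrm{sign}\langle x,v_{ij}\rangle=+1$. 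I would realize this AND by a polynomial threshold function using the Chebyshev-polynomial machinery of Klivans--Servedio~\cite{KlivansS2004-halfspaces-margin} for intersections of margin halfspaces (itself after Beigel~\emph{et al.}~\cite{BeigelRS1995}): each halfspace admits a univariate approximator $p$ of degree $O(\sqrt{1/\gamma}\log(1/\varepsilon))$ with $|p|\le1$ on $[-1,1]$ and $p$ within $\varepsilon$ of $\mathrm{sign}$ outside the margin band, so $q_{ij}(x)=p(\langle x,v_{ij}\rangle/\sqrt2)$ is $\varepsilon$-close to $+1$ on the examples of class $i$ and, via the pair $(i,y_t)$, $\varepsilon$-close to $-1$ on the examples of every other class, with $|q_{ij}|\le1$ on the ball. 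Combining the $K-1$ factors --- either directly as $P_i=\prod_{j\ne i}\tfrac{1+q_{ij}}{2}-\tfrac12$, or recursively through a balanced AND-tree of depth $\lceil\log_2(2K-2)\rceil$ to hold the degree down to about $\lceil\log_2(2K-2)\rceil\lceil\sqrt{2/\gamma}\rceil$ --- and taking $\varepsilon=\Theta(1/K)$, yields $P_i$ with the required constant separation. The two variants of this AND construction (the Chebyshev tree, giving $\gamma_1$; the Beigel--Reingold--Spielman rational construction with the parameters $r,s$ of the statement, giving $\gamma_2$) are what produce $\gamma'=\max\{\gamma_1,\gamma_2\}$, and we keep whichever is larger.

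The substance of the proof, and where I expect the main difficulty, is the norm bound $\sum_i\|W_i\|^2$. The feature that helps is that monomials of distinct total degree occupy orthogonal coordinate blocks of $\ell_2$, and a pure power has feature norm $\|\langle x,v\rangle^n\|_\phi^2=2^n\|v\|^{2n}\le4^n$; hence for a single $q_{ij}$ one gets that $\|W_{ij}\|^2$ is at most $4^{D}$ times the sum of squared coefficients of $p$, which is at most $c^{D}$ for a degree-$D$ polynomial bounded by $1$ on $[-1,1]$. The delicate step is the AND: I would need a submultiplicativity-type estimate bounding the feature norm of a product of polynomials (equivalently, of the function computed at an internal node of the recursion) in terms of those of its inputs, and then track how this estimate compounds over the $\lceil\log_2(2K-2)\rceil$ levels. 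It is exactly this compounding that degrades the bound from singly- to doubly-exponential in the degree and yields the towers $\bigl(376\lceil\log_2(2K-2)\rceil\lceil\sqrt{2/\gamma}\rceil\bigr)^{-\lceil\log_2(2K-2)\rceil\lceil\sqrt{2/\gamma}\rceil/2}$ and the $\gamma_2$ expression, with the explicit constants ($376$; the denominators $2\sqrt K$ and $4\sqrt K(4K-5)2^{K-1}$) arising from the approximation error, the shift $-\tfrac12$, and the renormalization over $K$ classes. Once this norm bound is in hand, the renormalization of the first paragraph turns the constant separation into $\gamma'=\max\{\gamma_1,\gamma_2\}$ and completes the proof.
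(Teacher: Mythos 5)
First, note that this paper does not actually prove this statement: it is imported verbatim from Beygelzimer~\emph{et al.}~\cite{BeygelzimerPSTWZ2019-separable}, and the paper only carries out the analogous construction for the group case (Theorem~\ref{thm:sep-poly} and the proof of Theorem~\ref{thm:margin-trans}). Measured against that demonstrated technique, your first paragraph is sound and matches the paper's machinery: the coefficient-to-feature-vector correspondence $W_\alpha=c_\alpha/\sqrt{2^{-|\alpha|}\binom{|\alpha|}{\alpha}}$, the resulting norm $\sum_\alpha c_\alpha^2\,2^{|\alpha|}/\binom{|\alpha|}{\alpha}\le 2^{\deg p}\|p\|^2$ (Lemma~\ref{lem:norm-bound}), the renormalization over the $K$ classes, and $k(x,x)\le 2$ are all exactly right. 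The reduction to separating $\bigcap_{j\ne i}\{x:\fdot{w_i-w_j}{x}\ge\gamma\}$ from $\bigcup_{j\ne i}\{x:\fdot{w_i-w_j}{x}\le-\gamma\}$ is also the correct reduction.

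The genuine gap is in how you combine the $K-1$ halfspace tests. Both of your proposed combiners --- the product $P_i=\prod_{j\ne i}\tfrac{1+q_{ij}}{2}-\tfrac12$ and a balanced tree of such products --- have degree $(K-1)\cdot\deg(q_{ij})$, not $\lceil\log_2(2K-2)\rceil\cdot\lceil\sqrt{2/\gamma}\rceil$: a binary product tree multiplies degrees at every level, so its depth being $\log_2(K-1)$ does not make the degree scale with the depth. With degree $\Theta(K\sqrt{1/\gamma})$ the norm bound becomes $2^{\tilde{O}(K\sqrt{1/\gamma})}$ and you cannot reach $\gamma_1$. The actual mechanism (Klivans--Servedio~\cite{KlivansS2004-halfspaces-margin}, and Theorem~\ref{thm:sep-poly} of this paper) is additive, not multiplicative: take $p_i(x)=(m+\tfrac12)-\sum_{j\ne i}\bigl(T_s(1-\fdot{v_{ij}}{x})\bigr)^r$ with $s=\lceil\sqrt{2/\gamma}\rceil$ and $r=\lceil\log_2(2m+2)\rceil$. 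On the positive side each summand lies in $[-1,1]$ (Lemma~\ref{lem:cheby-prop}, part 5), so $p_i\ge\tfrac12$; on the negative side a single violated constraint gives $T_s(1+\gamma)\ge 1+s^2\gamma\ge 2$ (part 6), hence that one term is at least $2^r\ge 2m+2$ and swamps the other $m-1$ terms, giving $p_i\le-\tfrac12$. The logarithm thus enters as the \emph{exponent} $r$ needed so that $2^r$ dominates $m$, which is what yields degree $rs$ and, after the norm bookkeeping of Lemmas~\ref{lem:poly-prop} and~\ref{lem:cheby-prop}, the $(376rs)^{rs/2}$-type norm behind $\gamma_1$. Your ``$\varepsilon=\Theta(1/K)$-close to $\mathrm{sign}$'' framing also does not describe this construction ($T_s$ is merely bounded on the good side, not close to $+1$); it is closer in spirit to the separate Beigel--Reingold--Spielman rational-function argument that produces $\gamma_2$, which you correctly attribute but do not develop. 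As written, your plan would not produce the stated degree or the stated margins.
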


This implies the following mistake bound.

\begin{corollary}[Corollary 6 from~\cite{BeygelzimerPSTWZ2019-separable}]
  (Mistake upper bound
  from~\cite{BeygelzimerPSTWZ2019-separable}). The mistake bound made
  by Algorithm~\ref{alg:kernel-bandit} when the examples are weakly
  linearly separable with margin $\gamma$ is at most
  $\min(2^{\tilde{O}(K\log^2(1/\gamma))},2^{\tilde{O}(\sqrt{1/\gamma}\log
    K)})$.
\end{corollary}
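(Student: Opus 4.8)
The plan is to obtain the stated bound by chaining the two results that immediately precede it. First, since the labeled examples lie in $\ball(0,1)$ and are weakly linearly separable with margin $\gamma$, I would invoke the margin transformation theorem (Theorem~5 of~\cite{BeygelzimerPSTWZ2019-separable}): the feature map $\phi$ of~(\ref{eqn:phi}) makes the mapped sequence $(\phi(x_1),y_1),\ldots,(\phi(x_T),y_T)$ strongly linearly separable with margin $\gamma'=\max\{\gamma_1,\gamma_2\}$, and in addition $k(x_t,x_t)\le 2$ for all $t$. Thus the two hypotheses of Theorem~\ref{thm:kernel-bandit-mistake-bound} hold with margin $\gamma'$ and with $R^2=2$, and I would conclude that the expected number of mistakes of Algorithm~\ref{alg:kernel-bandit} is at most $(K-1)\lfloor 4(R/\gamma')^2\rfloor\le 8(K-1)/(\gamma')^2$.

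It then remains to estimate $1/(\gamma')^2=\min\{1/\gamma_1^2,\,1/\gamma_2^2\}$ and to absorb the polynomial and polylogarithmic factors into the $\tilde O(\cdot)$ notation. For the first quantity, I would set $m=\lceil\log_2(2K-2)\rceil\cdot\lceil\sqrt{2/\gamma}\rceil$, which is $\tilde O(\sqrt{1/\gamma}\,\log K)$; then $1/\gamma_1^2=4K\,(376\,m)^{m}$, so $\log_2(1/\gamma_1^2)=O(\log K)+m\log_2(376\,m)=O(m\log m)=\tilde O(\sqrt{1/\gamma}\,\log K)$, i.e.\ $1/\gamma_1^2=2^{\tilde O(\sqrt{1/\gamma}\,\log K)}$. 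For the second quantity, recall $r=2\lceil\tfrac14\log_2(4K-3)\rceil+1=\Theta(\log K)$ and $s=\lceil\log_2(2/\gamma)\rceil=\Theta(\log(1/\gamma))$; the exponent $(s+\tfrac12)\,r\,(K-1)$ is $\tilde O(K\log(1/\gamma))$, the $\log_2$ of the base $2^{s+1}r(K-1)(4s+2)$ is $O(\log(1/\gamma)+\log K)$, and the denominator $4\sqrt K(4K-5)2^{K-1}$ contributes $O(K)$ to $\log_2(1/\gamma_2^2)$. Multiplying the exponent by the logarithm of the base and adding the contribution of the denominator gives $\log_2(1/\gamma_2^2)=\tilde O(K\log^2(1/\gamma))$, i.e.\ $1/\gamma_2^2=2^{\tilde O(K\log^2(1/\gamma))}$. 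Taking the minimum and folding the $8(K-1)$ prefactor into $\tilde O(\cdot)$ would then yield the claimed mistake bound $\min(2^{\tilde O(K\log^2(1/\gamma))},\,2^{\tilde O(\sqrt{1/\gamma}\,\log K)})$.

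The one step that genuinely needs care is the asymptotic simplification of $\gamma_2$: each of its constituent factors --- the base, the exponent, and the denominator --- must be verified to contribute no more than $\tilde O(K\log^2(1/\gamma))$ to $\log_2(1/\gamma_2^2)$, so that no single factor pushes the estimate past the stated order. Since $r$ and $s$ are polylogarithmic in $K$ and $1/\gamma$, I expect this to be a routine if slightly tedious computation rather than a conceptual obstacle; everything else is a direct application of Theorem~\ref{thm:kernel-bandit-mistake-bound} together with the margin transformation theorem.
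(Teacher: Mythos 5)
Your proposal is correct and follows essentially the same route the paper (and the cited work) uses: combine the margin transformation theorem with Theorem~\ref{thm:kernel-bandit-mistake-bound} (using $R^2=2$), then estimate $\log(1/\gamma_1^2)$ and $\log(1/\gamma_2^2)$ and absorb the $O(K)$ prefactor into the $\tilde{O}(\cdot)$ exponents — exactly as the paper does for its own Corollary~\ref{cor:mistake-bound}. Your asymptotic bookkeeping for $\gamma_1$ and $\gamma_2$ checks out, so no gaps.
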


Beygelzimer {\em et al.}~\cite{BeygelzimerPSTWZ2019-separable} gave two margin transformation proofs.  In this paper, we only provide one margin transformation based on the Chebyshev polynomials (Theorem 7 from~\cite{BeygelzimerPSTWZ2019-separable}).

\subsection{Our contribution}

We consider labeled examples with group weakly linearly separable with margin $\gamma$ and show that in this case, the rational kernel also transforms the margin and the new margin depends on the number of groups $L$ instead of the number of classes $K$.  More specifically we prove the margin transformation in Theorem~\ref{thm:margin-trans} and show the mistake bound of $K\cdot 2^{\tilde{O}(\sqrt{1/\gamma}\log L)}$ in Corollary~\ref{cor:mistake-bound}.  This can be compared to one of the mistake bound of $K\cdot 2^{\tilde{O}(\sqrt{1/\gamma}\log K)}$ in~\cite{BeygelzimerPSTWZ2019-separable}.

The proofs are fairly technical.  We follow the idea
in~\cite{BeygelzimerPSTWZ2019-separable} and construct a ``good''
polynomial that separates examples from one class to the other (strong
separation) based on the Chebyshev
polynomials~\cite{MasonH2002-chebyshev}.

\section{Main result}
\label{sect:main}

Our main technical result is the following margin transformation using the rational kernel.

\begin{theorem}
(Margin transformation). Let $(x_1,y_1),(x_2,y_2),\ldots,(x_T,y_T)\in \ball(0,1)\times [K]$
be a sequence of labeled examples that is group weakly linear separable with margin $\gamma >0$.
Let $L$ be number of group weakly separable such that $L\leq K.$
Let $\phi$ defined as in (\ref{eqn:phi}) let
\[
    \gamma' = \frac{\left[840\lceil\log_2(2L+2)\rceil\cdot\left\lceil\sqrt{\frac{2}{\gamma}}\right\rceil\right]^{-\frac{\lceil\log_2(2L+2)\rceil\cdot\left\lceil\sqrt{\frac{2}{\gamma}}\right\rceil}{2}}}{9\sqrt{L}},
\]
The feature map $\phi$ makes the sequence $(\phi (x_1),y_1),(\phi (x_2),y_2),\ldots,(\phi (x_T),y_T)$
strongly linearly separable with margin $\gamma'$.
\label{thm:margin-trans}
\end{theorem}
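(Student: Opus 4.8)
The plan is to reproduce the strategy behind the proof of Theorem~5 of~\cite{BeygelzimerPSTWZ2019-separable} (which follows Klivans--Servedio~\cite{KlivansS2004-halfspaces-margin} and Beigel et al.~\cite{BeigelRS1995}), with $L$ taking the role of $K$. It suffices to produce, for each class $c\in[K]$, a single polynomial $P_c$ in the example $x$ such that, after an affine shift, $P_c(x_t)=\Omega(1)$ whenever $y_t=c$ and $P_c(x_t)=-\Omega(1)$ whenever $y_t\ne c$, together with a bound on $\Vert W_{P_c}\Vert$, where $W_{P_c}\in\ell_2$ is the vector with $\fdot{\phi(x)}{W_{P_c}}=P_c(x)$; the normalized vectors $W_{P_c}/Z$ (with $Z$ chosen so that $\sum_c\Vert W_{P_c}/Z\Vert^2\le1$) then witness strong linear separability with margin $\Theta(1)/Z$. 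The facts about the rational kernel I would use are that, by~(\ref{eqn:phi}), $x\mapsto q(\fdot{x}{v})$ for a univariate $q(z)=\sum_k a_kz^k$ and $\Vert v\Vert\le1$ is realized by a vector of squared norm $\sum_k a_k^2\,2^k\Vert v\Vert^{2k}\le\sum_k a_k^2\,2^k$, that these norms are subadditive under sums of such polynomials, and that an affine reparametrization of the argument multiplies the relevant coefficient bound by only $2^{O(\deg q)}$. The key point is therefore to build each $P_c$ as a constant minus a \emph{sum} of constantly-distorted halfspace indicators (not a product, which would blow the norm up), so that the $2^{O(\sqrt{1/\gamma}\log L)}$-type estimates survive.

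Fix $c$ and put $p=g(c)$. I would read the two clauses of group weak separability as two batches of margin-$\gamma$ halfspaces. Clause~(1) is literally the assertion that the $L$ groups, as super-classes, are weakly linearly separable with margin $\gamma$ by $u_1,\dots,u_L$; hence ``group $p$'' is the intersection of the $L-1$ halfspaces $\{x:\fdot{x}{u_p-u_q}\ge\gamma\}$, $q\ne p$, and every example from another group violates one of these with value $\le-\gamma$. For clause~(2), projecting the data onto $u'_p$ places the classes of $G_p$ on a line with every two distinct-class values at distance $\ge2\gamma$; reading the ``either/or'' as ordering these classes along the line, class $c$ occupies an interval $[\ell_c,r_c]$ with the in-group points of other classes at distance $\ge2\gamma$ from it, so ``in-group and in class $c$'' is the conjunction of the two margin-$\gamma$ halfspaces $\{\fdot{x}{u'_p}\ge\ell_c-\gamma\}$ and $\{\fdot{x}{-u'_p}\ge-r_c-\gamma\}$. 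Thus, restricted to the data, class $c$ is exactly the intersection of $L+1$ margin-$\gamma$ halfspaces, and each non-$c$ example violates one of them by margin $\gamma$ --- an out-of-group example an inter-group one, an in-group other-class example an intra-group one.

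Next take the Chebyshev ``soft halfspace'' polynomial underlying Theorem~5 of~\cite{BeygelzimerPSTWZ2019-separable}: a univariate $q_\epsilon$ of degree $O(\sqrt{1/\gamma}\log(1/\epsilon))$ with $q_\epsilon([-1,1])\subseteq[0,1]$, $q_\epsilon\le\epsilon$ on $[-1,-\gamma]$, $q_\epsilon\ge1-\epsilon$ on $[\gamma,1]$, and coefficients of size $2^{O(\sqrt{1/\gamma}\log(1/\epsilon))}$. After rescaling each of the $L+1$ defining (affine) forms so its range lies in $[-1,1]$, set $P_c=\tfrac12-\sum_{\text{gates }g}\bigl(1-q_\epsilon(\mathrm{form}_g(x))\bigr)$ with $\epsilon=\Theta(1/L)$. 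On a class-$c$ example every summand is $\le\epsilon$, so $P_c\ge\tfrac12-(L+1)\epsilon=\Omega(1)$; on any other example at least one summand is $\ge1-\epsilon$ while all summands lie in $[0,1]$, so $P_c\le\tfrac12-(1-\epsilon)=-\Omega(1)$ --- and here it matters that $q_\epsilon$ is bounded in $[0,1]$ on \emph{all} of $[-1,1]$, since for out-of-group examples there is no margin for the intra-group forms. Since $P_c$ is a constant minus a sum of $L+1$ polynomials in (affine forms of) unit-norm linear forms, subadditivity gives $\Vert W_{P_c}\Vert\le(L+1)\cdot2^{O(\sqrt{1/\gamma}\log L)}=2^{O(\sqrt{1/\gamma}\log L)}$, uniformly in $c$; crucially only a $\log L$ (not $\log K$) enters, through $\epsilon=\Theta(1/L)$, and the within-group data contributes nothing beyond its two $2\gamma$-margin gates. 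Normalizing the $K$ vectors $W_{P_c}$ --- using that within a group only these two new gates appear on top of the shared inter-group part --- and carrying the Chebyshev constants through this estimate yields $\gamma'$ of the stated form.

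The step I expect to be the real obstacle is clause~(2): turning the pairwise $2\gamma$-separation of distinct-class projections onto $u'_p$ into the clean statement that each in-group class is, on the data, a margin-$\gamma$ \emph{interval} along the $u'_p$-axis --- i.e.\ a conjunction of only two halfspaces. Under the weaker literal reading, where the classes of $G_p$ are merely pairwise $2\gamma$-separated and may interleave along the axis, a single class could spread over many separated clusters and reproducing it would need degree $\Theta(1/\gamma)$ rather than $\tilde{O}(\sqrt{1/\gamma})$, overshooting the stated margin; so part of the work is to justify (or to treat as the intended meaning) that the ``either/or'' in clause~(2) linearly orders the in-group classes, keeping the intra-group contribution cheap. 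The remainder is bookkeeping: the affine-shift cost in the rational-kernel norm, the exact choice of $\epsilon$, and chasing the constants (the $840$, the ceilings, the $9\sqrt L$) so that the bound lands on $\gamma'$ exactly.
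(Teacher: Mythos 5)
Your proposal follows essentially the same route as the paper: class $i$ is written as the intersection of the $L-1$ inter-group margin-$\gamma$ halfspaces $\{x:\fdot{u_{g(i)}-u_j}{x}\geq\gamma\}$ together with two intra-group threshold halfspaces along $u'_{g(i)}$, the separating polynomial is a constant minus a sum of Chebyshev terms amplified to the power $r=\lceil\log_2(2L+2)\rceil$ (your $q_\epsilon$ with $\epsilon=\Theta(1/L)$ is the same device in normalized form), and the margin then comes from the rational-kernel norm bound and a final rescaling, exactly as in Theorem~\ref{thm:sep-poly} with Lemmas~\ref{lem:norm-bound} and~\ref{lemma:boundaries}. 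The ``obstacle'' you single out is treated in the paper precisely as you anticipate: Lemma~\ref{lemma:boundaries} takes $b_i,t_i$ to be the min/max projection of class $y$ onto $u'_{g(y)}$ shifted by $\mp\gamma$ and asserts the interval property directly from the definition, i.e.\ it adopts the non-interleaving reading of clause (2) that you identify, so that concern applies to the paper's own argument no less than to yours.
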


We note that the margin depends on $L$, the number of groups, instead of $K$, the number of classes.  Using Theorem~\ref{thm:margin-trans} with Theorem~\ref{thm:kernel-bandit-mistake-bound} from~\cite{BeygelzimerPSTWZ2019-separable} we obtain the following mistake bound for our algorithm.

\begin{corollary}
  (Mistake bound for group weakly linearly separable case) 
  Let $K$ be positive integer, $L\leq K$ and $\gamma$ be positive real number. 
  The mistake bound made by Algorithm~\ref{alg:kernel-bandit} when the examples are group weakly
  linearly separable with margin $\gamma$ with $L$ groups is at most
  $K\cdot 2^{\tilde{O}(\sqrt{1/\gamma}\log L)}$.
  \label{cor:mistake-bound}
\end{corollary}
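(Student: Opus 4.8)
The plan is to obtain Corollary~\ref{cor:mistake-bound} as an immediate consequence of two facts already available to us: the margin transformation of Theorem~\ref{thm:margin-trans}, which takes a group weakly linearly separable sequence to a strongly linearly separable one in the space associated with the rational kernel, and the generic mistake bound of Theorem~\ref{thm:kernel-bandit-mistake-bound} for Algorithm~\ref{alg:kernel-bandit} in the strongly separable kernelized setting. So the proof is: verify the hypotheses of Theorem~\ref{thm:kernel-bandit-mistake-bound}, substitute the value of $\gamma'$ produced by Theorem~\ref{thm:margin-trans}, and simplify the resulting expression into the claimed $\tilde{O}(\cdot)$ form.

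First I would check the two hypotheses of Theorem~\ref{thm:kernel-bandit-mistake-bound} for the rational kernel $k$ and its feature map $\phi$ from~(\ref{eqn:phi}). Since the examples $(x_1,y_1),\dots,(x_T,y_T)\in\ball(0,1)\times[K]$ are group weakly linearly separable with margin $\gamma$ and $L$ groups, Theorem~\ref{thm:margin-trans} gives that $(\phi(x_1),y_1),\dots,(\phi(x_T),y_T)$ are strongly linearly separable with margin $\gamma'$, where $\gamma'$ is the explicit quantity in that theorem; this is hypothesis~(1). For hypothesis~(2), for every $x_t\in\ball(0,1)$ we have $k(x_t,x_t)=1/(1-\tfrac12\Vert x_t\Vert^2)\le 1/(1-\tfrac12)=2$, so we may take $R^2=2$. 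Theorem~\ref{thm:kernel-bandit-mistake-bound} then bounds the expected number of mistakes of Algorithm~\ref{alg:kernel-bandit} by $(K-1)\lfloor 4(R/\gamma')^2\rfloor\le 8(K-1)/\gamma'^2$.

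Second I would estimate $1/\gamma'^2$. Writing $D=\lceil\log_2(2L+2)\rceil\cdot\lceil\sqrt{2/\gamma}\,\rceil$ and $C=840\,D$, the definition of $\gamma'$ gives $\gamma'=C^{-D/2}/(9\sqrt{L})$, hence $1/\gamma'^2=81L\cdot C^{D}$. Taking base-$2$ logarithms, $\log_2(C^{D})=D\log_2 C=\lceil\log_2(2L+2)\rceil\cdot\lceil\sqrt{2/\gamma}\,\rceil\cdot\log_2\!\big(840\lceil\log_2(2L+2)\rceil\lceil\sqrt{2/\gamma}\,\rceil\big)$, which is $O\big(\sqrt{1/\gamma}\,\log L\cdot(\log\log L+\log(1/\gamma))\big)=\tilde{O}(\sqrt{1/\gamma}\,\log L)$, while the stray factor $81L=2^{O(\log L)}$ is absorbed into the same $\tilde{O}(\cdot)$. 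Thus $1/\gamma'^2=2^{\tilde{O}(\sqrt{1/\gamma}\,\log L)}$, and the expected mistake bound is $8(K-1)\cdot 2^{\tilde{O}(\sqrt{1/\gamma}\,\log L)}=K\cdot 2^{\tilde{O}(\sqrt{1/\gamma}\,\log L)}$, as claimed.

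There is no genuine obstacle here: everything substantive is in Theorem~\ref{thm:margin-trans} (constructing the separating polynomials from Chebyshev polynomials and controlling the degree and coefficient growth through the map~(\ref{eqn:phi})), which we are permitted to assume. The only places needing a bit of care are (i) justifying $R^2=2$ from the boundedness of the rational kernel on $\ball(0,1)$, and (ii) making the $\tilde{O}(\cdot)$ bookkeeping precise — i.e., confirming that the ceilings, the constants $840$ and $81$, the $\log_2 C$ factor, and the factor $L$ contribute only poly-logarithmic factors in $1/\gamma$ and $L$ to the exponent, so that the final bound really is $K\cdot 2^{\tilde{O}(\sqrt{1/\gamma}\,\log L)}$ rather than something with a worse dependence on $K$ or $L$.
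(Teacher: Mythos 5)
Your proposal is correct and follows exactly the route the paper intends: Corollary~\ref{cor:mistake-bound} is obtained by feeding the margin $\gamma'$ of Theorem~\ref{thm:margin-trans} into Theorem~\ref{thm:kernel-bandit-mistake-bound} with $R^2=2$ for the rational kernel on $\ball(0,1)$, then simplifying $(K-1)\lfloor 4(R/\gamma')^2\rfloor$ into the stated $K\cdot 2^{\tilde{O}(\sqrt{1/\gamma}\log L)}$ form. In fact you supply more detail (the explicit check $k(x,x)\leq 2$ and the $\tilde{O}$ bookkeeping, including absorbing the $81L$ factor) than the paper, which states the corollary as an immediate consequence of the two theorems.
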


Note that multiplicative factor of $K$ is hidden from the second bound of~\cite{BeygelzimerPSTWZ2019-separable} because of the $\tilde{O}$ notation on the exponent.  We cannot do that because in our exponent we have only $\log L$ which can be much smaller than $K$.  Their actual bound (showing $K$), which can be compared to ours, is $K\cdot 2^{\tilde{O}(\sqrt{1/\gamma}\log K)}$.

\subsection{Intra-group boundaries}

We first prove a structural property of intra-group classes.  The
following lemma shows that it is possible to separate one class from
the rest in the same group using only lower and upper thresholds.
This is independent of the number of classes in that group.

\begin{lemma}
  For any group $i\in [L]$, for any class $y\in G_i$, there exists
  reals $b_i\leq t_i$ such that for all $t\in[T]$ such that
  (1) when $y_t=y$, 
  \[
  b_i + \gamma \leq \langle u'_i,x_t\rangle \leq t_i - \gamma;
  \]
  and (2) when $g(y_t)=g(y)$ but $y_t\neq y$, either
  \[
  \langle x_t,u'_i\rangle \leq b_i - \gamma,
  \]
  or
  \[
  \langle x_t,u'_i\rangle \geq t_i + \gamma.
  \]
  \label{lemma:boundaries}
\end{lemma}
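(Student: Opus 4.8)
The plan is to extract the thresholds $b_i$ and $t_i$ directly from the images of the examples under the intra-group direction $u'_i$. Fix a group $i \in [L]$ and a class $y \in G_i$. Consider the set of real numbers $A = \{\langle u'_i, x_t\rangle : y_t = y\}$ — the projections of the class-$y$ examples — and the set $B = \{\langle u'_i, x_t\rangle : g(y_t) = g(y),\ y_t \neq y\}$ — the projections of all other examples in the same group. By condition (2) in the definition of group weak linear separability, every element of $A$ differs from every element of $B$ by at least $2\gamma$ (in one direction or the other). The natural choice is to set $b_i = \inf A - \gamma$ and $t_i = \sup A + \gamma$, so that automatically $b_i + \gamma \le \langle u'_i, x_t\rangle \le t_i - \gamma$ for every $t$ with $y_t = y$, which is part (1). (If no example has label $y$, the statement is vacuous; likewise if there is no competing class in the group.)

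The content is then to verify part (2): for every $t$ with $g(y_t) = g(y)$ but $y_t \neq y$, the value $c = \langle u'_i, x_t\rangle$ lies either at most $b_i - \gamma$ or at least $t_i + \gamma$, i.e.\ $c \le \inf A - 2\gamma$ or $c \ge \sup A + 2\gamma$. Here is where the $2\gamma$ separation must be used more carefully than a naive pointwise argument: condition (2) only says $c$ is $2\gamma$-far from \emph{each} point of $A$ individually, which a priori would allow $c$ to sit strictly between $\inf A$ and $\sup A$ in a gap. The key observation that rules this out is that the separation in condition (2) is \emph{uniform in sign} once we also invoke condition (1): the inter-group direction argument, or more simply the fact that the class-$y$ projections, together with the $2\gamma$-margin condition against a fixed competitor $x_t$, force all of $A$ to lie on one side of $c$. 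Concretely, for a fixed competitor $t'$ with value $c$, condition (2) applied with this $t'$ held fixed and $t$ ranging over the class-$y$ examples says each $a \in A$ satisfies $a \ge c + 2\gamma$ or $a \le c - 2\gamma$; I will argue that the choice of side cannot flip as $a$ varies, because a flip would require two class-$y$ points $a_1 \ge c + 2\gamma$ and $a_2 \le c - 2\gamma$, and then $c$ lies strictly between them — but $c$ is itself the projection of an example in $G_i$ of a class $\ne y$, so applying condition (2) in the roles of $a_1$ versus the example giving $c$, and $a_2$ versus that same example, is consistent, so the real obstruction is that we must instead show the sign is determined by comparing $c$ to a \emph{single} reference point. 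This is the step I expect to be the main obstacle, and I anticipate the clean resolution is to pick one fixed class-$y$ example $x_{t_0}$ (projection $a_0$), note every competitor $c$ is $2\gamma$-far from $a_0$, hence lands in $(-\infty, a_0 - 2\gamma] \cup [a_0 + 2\gamma, \infty)$, and then show that in fact every competitor on the "low" side is $\le \inf A - 2\gamma$ and every one on the "high" side is $\ge \sup A + 2\gamma$ — which again reduces to the no-flip claim, so ultimately the lemma must be relying on an interleaving-freeness that is built into the definition (the intra-group classes are genuinely separated into contiguous intervals by $u'_i$), and I would make that explicit as the crux.

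Once part (2) is established, the lemma follows immediately with the stated $b_i, t_i$: part (1) holds by construction of $b_i = \inf A - \gamma$ and $t_i = \sup A + \gamma$ (so $b_i \le t_i$ as required), and part (2) holds because each competitor projection is either $\le \inf A - 2\gamma = b_i - \gamma$ or $\ge \sup A + 2\gamma = t_i + \gamma$. I would also remark that only finitely many examples occur (indices in $[T]$), so $\inf$ and $\sup$ are attained and there is no issue with the thresholds; and that the bound is independent of $|G_i|$, which is exactly the feature that later lets the transformed margin depend on $L$ rather than $K$.
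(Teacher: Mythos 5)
Your construction is exactly the paper's: the paper sets $b_i=\min_{(x,y)\in S_y}\langle x,u'_i\rangle-\gamma$ and $t_i=\max_{(x,y)\in S_y}\langle x,u'_i\rangle+\gamma$ and then asserts in one line that the lemma ``follows from the definition of group weakly linear separability,'' so at the level of approach there is nothing to distinguish. The substantive issue is the obstacle you flagged and did not close: whether a competitor projection $c$ can sit strictly between $\min A$ and $\max A$ while being $2\gamma$-far from each element of $A$ individually. Your worry is justified, and no argument will dispel it, because under the literal per-pair reading of condition (2) of the definition the lemma is false. Take $\gamma=0.1$, let class $y$ have projections $0$ and $0.8$ onto $u'_i$, and let another class in the same group have projection $0.4$; every cross-class pair of projections differs by $0.4\geq 2\gamma$, yet any interval $[b_i+\gamma,\,t_i-\gamma]$ containing both $0$ and $0.8$ also contains $0.4$, so no admissible thresholds exist. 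Nothing else in the definition forbids this configuration: condition (1) constrains only inter-group structure, and examples sharing a label are unconstrained relative to one another.

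The resolution is that the definition must be read with the disjunction resolved uniformly for each pair of distinct classes, i.e.\ the classes within a group occupy pairwise disjoint intervals on the $u'_i$-axis separated by gaps of at least $2\gamma$ --- clearly the authors' intent, since otherwise Lemma~\ref{lemma:boundaries} and everything downstream fails. Under that reading your ``no-flip'' claim is immediate (each competing class lies entirely below $\min A-2\gamma$ or entirely above $\max A+2\gamma$) and your thresholds work. So: same construction as the paper, a genuine gap correctly diagnosed by you and silently stepped over by the paper's one-line proof, and the missing piece is an explicit strengthening or reinterpretation of the definition rather than a new argument; you should state that interpretation rather than leave the crux unresolved.
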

\begin{proof}
  Let $S_y = \{(x_j,y_j) : y_j=y, 1\leq j\leq T\}$ be the set of
  examples with label $y$.  Let $b_i=\min_{(x,y)\in S_y}\langle
  x,u'_i\rangle-\gamma$ and $t_i=\max_{(x,y)\in S_y}\langle
  x,u'_i\rangle+\gamma$.  The lemma follows from the definition of
  group weakly linear separability.
\end{proof}

\subsection{Margin transformation}

This section is devoted to the proof of
Theorem~\ref{thm:margin-trans}.  A key property of the space $\ell_2$
is that it ``contains'' all multivariate polynomials and the rational
kernel $k$ allows us to work in that space.  More specifically, by
(implicitly) transforming examples to $\ell_2$, we can use
multivariate polynomials to separate examples from different classes,
turning group weakly separability into strong linear separability in
$\ell_2$.  Therefore, to prove the margin transformation, as
in~\cite{BeygelzimerPSTWZ2019-separable}, we have to (1) establish a
separating polynomial and (2) prove the margin bound which depends on
the degree and the norm of the polynomials (defined below).

Consider a $d$-variate polynomial
$p:\R^d\rightarrow \R$ of the form
\[
p(x) =
p(x_1,x_2,\ldots,x_d) =
\sum_{\alpha_1,\alpha_2,\ldots,\alpha_d} c_{\alpha_1,\alpha_2,\ldots,\alpha_d} x_1^{\alpha_1}x_2^{\alpha_2}\cdots x_d^{\alpha_d},
\]
where the sum ranges over a finite set of $d$-tuple
$(\alpha_1,\alpha_2,\ldots,\alpha_d)$ of non-negative integers and
$c_{\alpha_1,\alpha_2,\ldots,\alpha_d}$'s are real coefficients.  We denote the degree of $p$ as $deg(p)$.
Following~\cite{KlivansS2004-halfspaces-margin}, the {\em norm of a
  polynomial $p$} is defined as
\[
\|p\| = \sqrt{\sum_{\alpha_1,\alpha_2,\ldots,\alpha_d} \left(c_{\alpha_1,\alpha_2,\ldots,\alpha_d}\right)^2}.
\]
The following lemma from~\cite{BeygelzimerPSTWZ2019-separable}
expresses this intuition precisely.

\begin{lemma}[from Lemma 9 in~\cite{BeygelzimerPSTWZ2019-separable}]
\label{lem:norm-bound}
(Norm bound) Let $p:\R^d\to \R$ be a multivariate polynomial. There exists $c\in \ell_2$ such that $p(x)=\fdot{c}{\phi(x)}_{\ell_2}$ and $\|c\|_{\ell_2}\leq 2^{deg(p)/2}\|p\|.$
\end{lemma}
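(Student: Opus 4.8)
The plan is to construct $c$ directly by reading off the coefficients of $p$ and undoing the per-coordinate scaling that is built into the feature map $\phi$ in (\ref{eqn:phi}). Write $p(x)=\sum_{\alpha} c_{\alpha}\, x_1^{\alpha_1}x_2^{\alpha_2}\cdots x_d^{\alpha_d}$, where $\alpha=(\alpha_1,\ldots,\alpha_d)$ ranges over a finite set of multi-indices of non-negative integers, and put $|\alpha|=\alpha_1+\cdots+\alpha_d$. By (\ref{eqn:phi}), the $(\alpha_1,\ldots,\alpha_d)$-coordinate of $\phi(x)$ equals $x_1^{\alpha_1}\cdots x_d^{\alpha_d}\cdot\sqrt{2^{-|\alpha|}\binom{|\alpha|}{\alpha_1,\ldots,\alpha_d}}$. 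So I would define $c$ by setting its $(\alpha_1,\ldots,\alpha_d)$-coordinate to $c_{\alpha}\big/\sqrt{2^{-|\alpha|}\binom{|\alpha|}{\alpha_1,\ldots,\alpha_d}}$ for every $\alpha$ that occurs in $p$, and to $0$ on all remaining coordinates. Since $p$ has only finitely many nonzero coefficients, $c$ has only finitely many nonzero coordinates and therefore lies in $\ell_2$.

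The representation property is then immediate: $\fdot{c}{\phi(x)}_{\ell_2}$ is a finite sum, and in each surviving term the factor $\sqrt{2^{-|\alpha|}\binom{|\alpha|}{\alpha_1,\ldots,\alpha_d}}$ appearing in $\phi(x)$ cancels the identical factor in the denominator of the corresponding coordinate of $c$, leaving exactly $c_{\alpha}\,x_1^{\alpha_1}\cdots x_d^{\alpha_d}$; summing over $\alpha$ gives $p(x)$.

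For the norm bound I would compute $\|c\|_{\ell_2}^2=\sum_{\alpha} c_{\alpha}^2\cdot\dfrac{2^{|\alpha|}}{\binom{|\alpha|}{\alpha_1,\ldots,\alpha_d}}$. The one elementary fact doing the work is that every multinomial coefficient is at least $1$, so $\dfrac{2^{|\alpha|}}{\binom{|\alpha|}{\alpha_1,\ldots,\alpha_d}}\le 2^{|\alpha|}$; and since every multi-index appearing in $p$ satisfies $|\alpha|\le deg(p)$, each term is at most $2^{deg(p)}c_{\alpha}^2$. Hence $\|c\|_{\ell_2}^2\le 2^{deg(p)}\sum_{\alpha}c_{\alpha}^2=2^{deg(p)}\|p\|^2$, and taking square roots gives $\|c\|_{\ell_2}\le 2^{deg(p)/2}\|p\|$.

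There is no real obstacle here: the proof is essentially bookkeeping. The only points requiring a moment of care are (i) confirming that $\phi$ carries precisely the claimed scaling on each coordinate, which is read off directly from (\ref{eqn:phi}), and (ii) checking that $c\in\ell_2$, which holds because $p$ has finite support. The inequality $\binom{|\alpha|}{\alpha_1,\ldots,\alpha_d}\ge 1$ is what lets us discard the multinomial coefficients in our favor and turn the factor $2^{|\alpha|}$ into the clean bound $2^{deg(p)}$.
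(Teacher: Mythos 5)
Your construction is correct and is essentially the standard proof of this lemma (the paper itself only cites Lemma~9 of Beygelzimer et al.\ without reproducing the argument): divide each coefficient of $p$ by the scaling factor $\sqrt{2^{-|\alpha|}\binom{|\alpha|}{\alpha_1,\ldots,\alpha_d}}$ built into $\phi$, and bound the resulting norm using $\binom{|\alpha|}{\alpha_1,\ldots,\alpha_d}\geq 1$ and $|\alpha|\leq deg(p)$. Nothing is missing.
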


As discussed previously, to prove Theorem~\ref{thm:margin-trans}, we need to show the
existence of multivariate polynomials that separate one class from the
other.  Consider class $i\in [K]$ in group $g(i)$.  Its positive
example $x$, when compared with examples from other group $j\neq
g(i)$, satisfies
\[
\langle u_{g(i)},x\rangle - \langle u_j,x\rangle
=
\langle u_{g(i)}-u_j,x\rangle 
\geq \gamma,
\]
implying that all examples in class $i$ lie in
\[
R^{+}_i = \bigcap_{j\neq g(i)}\{x : \langle u_{g(i)}-u_j,x\rangle \geq \gamma\}, 
\]
while all examples in other groups lie in
\[
R^{-}_i = \bigcup_{j\neq g(i)}\{x : \langle u_{g(i)}-u_j,x\rangle \leq -\gamma\}.
\]
When comparing with other classes $j$ in the same group $g(i)$, from
Lemma~\ref{lemma:boundaries}, we know that there exists thresholds
$b_i$ and $t_i$ that can be used to separate examples from group $i$,
i.e., all its positive examples lie in 
\[
\hat{R}^{+}_i=\{x : \langle u'_{g(i)},x\rangle \geq b_i+\gamma\}
\cap
\{x : \langle u'_{g(i)},x\rangle \leq t_i-\gamma\},
\]
while examples from other classes in group $g(i)$ lie in 
\[
\hat{R}^{-}_i=\{x : \langle u'_{g(i)},x\rangle \leq b_i-\gamma\}
\cup
\{x : \langle u'_{g(i)},x\rangle \geq t_i+\gamma\}.
\]
Let $v_b=\frac{b_i}{\|u'_{g(i)}\|}u'_{g(i)}$ and
$v_t=\frac{t_i}{\|u'_{g(i)}\|}u'_{g(i)}$.  Both sets can be expressed as
\begin{align*}
\hat{R}^{+}_i = & \ 
\{x : \langle u'_{g(i)},x\rangle \geq \langle u'_{g(i)},v_b \rangle+\gamma\} 
\ \cap 
\{x : \langle u'_{g(i)},x\rangle \leq \langle u'_{g(i)},v_t \rangle-\gamma\},
\end{align*}
while examples from other classes in group $g(i)$ lie in 
\begin{align*}
\hat{R}^{-}_i = & \
\{x : \langle u'_{g(i)},x\rangle \leq \langle u'_{g(i)},v_b \rangle-\gamma\}
\ \cup
\{x : \langle u'_{g(i)},x\rangle \geq \langle u'_{g(i)},v_t \rangle+\gamma\}.
\end{align*}

From Lemma~\ref{lem:norm-bound}, for class $i$, it is enough to
establish a multivariate polynomial $p_i$ such that
\begin{align*}
x\in R^{+}_i \cap \hat{R}^{+}_i & \ \ \ \Rightarrow & p_i(x) & \geq \gamma'/2, \\
x\in R^{-}_i \cup \hat{R}^{-}_i & \ \ \ \Rightarrow & p_i(x) & \leq -\gamma'/2.
\end{align*}

This is shown in Theorem~\ref{thm:sep-poly} below.  This theorem
is fairly technical and is proved in Section~\ref{sect:sep-poly}.

\begin{theorem}
(Polynomial approximation of intersection of halfspaces)
Let $v_1,v_2,\ldots,v_m \in V$ such that $\|v_1\|,\|v_2\|,\ldots,\|v_m\| \leq 1$.
Let $v_b,v_t\in V$ such that $\|v_b\|\leq 1$ and $\|v_t\|\leq 1$.
Let $v' \in V$ such that $\|v'\|\leq 1$.
Let $\gamma \in (0,1)$ and $x \in \ball(0,1)$.
There exists a multivariate polynomial $p:\R^d\to \R$ such that
\begin{enumerate}
\item $p(x) \geq \frac{1}{2}$ for all $x \in \left(\bigcap_{i=1}^m \left\{x: \fdot{v_i}{x}\geq \gamma\right\}\right) \cap \left\{x:\fdot{x}{v'} \geq \fdot{v_b}{v'}+\gamma\right\} \cap \left\{x: \fdot{x}{v'} \leq \fdot{v_t}{v'}-\gamma\right\},$
\item $p(x) \leq -\frac{1}{2}$ for all $x \in \left(\bigcup_{i=1}^m \left\{x: \fdot{v_i}{x}\leq -\gamma\right\}\right) \cup \left\{x: \fdot{x}{v'} \leq \fdot{v_b}{v'}-\gamma\right\} \cup \left\{x: \fdot{x}{v'} \geq \fdot{v_t}{v'}+\gamma\right\},$
\item $deg(p)=\lceil\log_2(2m+4)\rceil\cdot\left\lceil\sqrt{\frac{2}{\gamma}}\right\rceil,$
\item $\|p\|\leq \frac{9}{2}\left[420\lceil\log_2(2m+4)\rceil\cdot\left\lceil\sqrt{\frac{2}{\gamma}}\right\rceil\right]^{\frac{\lceil\log_2(2m+4)\rceil\cdot\left\lceil\sqrt{\frac{2}{\gamma}}\right\rceil}{2}}$
\end{enumerate}

\label{thm:sep-poly}
\end{theorem}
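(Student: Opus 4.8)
The plan is to reduce the statement to the situation already handled by Beygelzimer \emph{et al.}~\cite{BeygelzimerPSTWZ2019-separable} (their Theorem~7) --- strong separation of an intersection of margin-$\gamma$ halfspaces --- and then run essentially their Chebyshev-polynomial construction~\cite{MasonH2002-chebyshev}, the only new ingredient being that the intra-group constraints coming from $v_b$ and $v_t$ form a two-sided ``slab'', which I will simply treat as two additional halfspaces. Concretely, rewrite $\{x:\fdot{x}{v'}\geq\fdot{v_b}{v'}+\gamma\}$ and $\{x:\fdot{x}{v'}\leq\fdot{v_t}{v'}-\gamma\}$ as halfspaces with normals $a_{m+1}=v'$, $a_{m+2}=-v'$ and offsets $\theta_{m+1}=\fdot{v_b}{v'}$, $\theta_{m+2}=-\fdot{v_t}{v'}$ (both in $[-1,1]$ by Cauchy--Schwarz), and set $a_k=v_k,\ \theta_k=0$ for $k\leq m$. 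Writing $N=m+2$, the first region of the theorem becomes $\bigcap_{k=1}^{N}\{x:\fdot{a_k}{x}\geq\theta_k+\gamma\}$ and the second becomes $\bigcup_{k=1}^{N}\{x:\fdot{a_k}{x}\leq\theta_k-\gamma\}$; since $\|a_k\|\leq1$ and $x\in\ball(0,1)$ we have $\fdot{a_k}{x}\in[-1,1]$. So it is enough to build a polynomial that is $\geq\tfrac12$ when all $N$ halfspaces are satisfied by margin $\gamma$ and $\leq-\tfrac12$ when at least one of them is violated by margin $\gamma$.

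For each $k$ I would use the affine map $\tau_k(z)=1-z+\theta_k$, which sends the ``satisfied'' interval $[\theta_k+\gamma,1]$ into $[-1,1-\gamma]$, the ``violated'' interval $[-1,\theta_k-\gamma]$ into $[1+\gamma,3]$, and all of $[-1,1]$ into $[-1,3]$. Set $n=\lceil\log_2(2N)\rceil\cdot\lceil\sqrt{2/\gamma}\rceil$ and let $\mathcal{T}_n$ be the degree-$n$ Chebyshev polynomial of the first kind, so that $|\mathcal{T}_n(z)|\leq1$ on $[-1,1]$, $\mathcal{T}_n$ is increasing and $\geq1$ on $[1,\infty)$, and $\mathcal{T}_n(1+\gamma)=\cosh\!\big(n\,\mathrm{arccosh}(1+\gamma)\big)\geq\tfrac12 e^{\,n\sqrt{\gamma}}$. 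Using $\mathrm{arccosh}(1+\gamma)\geq\sqrt{\gamma}$ for $\gamma\in(0,1)$ and the choice of $n$, one gets $n\sqrt{\gamma}\geq\sqrt2\,\log_2(2N)$, hence $\mathcal{T}_n(1+\gamma)\geq 4N$ (using $N\geq2$). Define $Q_k(x)=\mathcal{T}_n\!\big(\tau_k(\fdot{a_k}{x})\big)$; this is a polynomial in $x$ of degree $n$, and for every $x\in\ball(0,1)$ it satisfies $Q_k(x)\geq-1$ always, $Q_k(x)\in[-1,1]$ whenever halfspace $k$ is satisfied at $x$, and $Q_k(x)\geq4N$ whenever halfspace $k$ is violated at $x$ (since then $\tau_k(\fdot{a_k}{x})\geq1+\gamma$ and $\mathcal{T}_n$ is increasing on $[1,\infty)$).

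I would then take $p(x)=1-\tfrac{1}{2N}\sum_{k=1}^{N}Q_k(x)$. If $x$ is in the first region, every $Q_k(x)\in[-1,1]$, so $\sum_k Q_k(x)\leq N$ and $p(x)\geq\tfrac12$, which is item~(1). If $x$ is in the second region, some $Q_{k_0}(x)\geq4N$ while the other $N-1$ terms are $\geq-1$, so $\sum_k Q_k(x)\geq3N+1$ and $p(x)\leq-\tfrac12$, which is item~(2). Moreover $deg(p)=deg(Q_k)=n=\lceil\log_2(2m+4)\rceil\cdot\lceil\sqrt{2/\gamma}\rceil$, which is item~(3). For the norm bound (item~(4)), write $\mathcal{T}_n(z)=\sum_{j=0}^{n}t_{n,j}z^j$ and use the standard bound $\sum_j|t_{n,j}|\leq n\,4^n$ on the Chebyshev coefficients. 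Writing $\tau_k(\fdot{a_k}{x})=c_1\fdot{a_k}{x}+c_0$ with $|c_1|=1$ and $|c_0|=|1+\theta_k|\leq2$, the key dimension-free estimate is $\|\fdot{a}{x}^i\|\leq\sqrt{i!}$ whenever $\|a\|\leq1$ --- expand $\fdot{a}{x}^i$ by the multinomial theorem and use $\binom{i}{\alpha_1,\dots,\alpha_d}\leq i!$ together with $\sum_{|\alpha|=i}\binom{i}{\alpha_1,\dots,\alpha_d}(a^{\alpha})^2=\|a\|^{2i}\leq1$ --- which gives $\|(c_1\fdot{a}{\cdot}+c_0)^j\|\leq\sqrt{j!}\,(|c_1|+|c_0|)^j\leq\sqrt{n!}\,3^n$ for $j\leq n$, hence $\|Q_k\|\leq\big(\sum_j|t_{n,j}|\big)\sqrt{n!}\,3^n\leq n\,12^n\sqrt{n!}$, and finally $\|p\|\leq1+\tfrac12\max_k\|Q_k\|$. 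Bounding $\sqrt{n!}$ by Stirling and absorbing the remaining polynomial-in-$n$ factors into the base constant should yield $\|p\|\leq\tfrac92\big[\,420\,n\,\big]^{n/2}$ with $n=deg(p)$.

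The part I expect to be the real work is the quantitative bookkeeping in the last two paragraphs: one has to pin down the growth of $\mathcal{T}_n$ just to the right of $1$ and the Chebyshev-coefficient and Stirling estimates precisely enough that, after also accounting for the two ceilings in the definition of $n$, every inequality closes inside the stated constants (the $420$ here, and the $840$ that will then appear in Theorem~\ref{thm:margin-trans} after applying Lemma~\ref{lem:norm-bound} and the final rescaling over the classes). A second point that must not be lost --- and the reason I compose a single univariate polynomial $\mathcal{T}_n$ with the linear forms $\fdot{a_k}{\cdot}$ rather than manipulating multivariate polynomials directly --- is that the norm bound has to be independent of the ambient dimension $d$; this is exactly what the estimate $\|\fdot{a}{x}^i\|\leq\sqrt{i!}$ provides.
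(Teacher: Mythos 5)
Your proposal is correct, and while it follows the same Klivans--Servedio/Chebyshev amplification strategy as the paper, the construction differs in its details in a way worth noting. The paper keeps the two slab constraints as explicit terms $T_s(1-\fdot{x-v_b}{v'}/2)$ and $T_s(1-\fdot{v_t-x}{v'}/2)$ alongside the $m$ terms $T_s(1-\fdot{v_i}{x})$, and amplifies by raising a \emph{low-degree} Chebyshev polynomial to the $r$-th power: with $s=\lceil\sqrt{2/\gamma}\rceil$ and $r=\lceil\log_2(2m+4)\rceil$ it uses only the elementary inequality $T_s(z)\geq 1+s^2(z-1)$ to get $T_s\geq 2$ on the violated side and then $2^r\geq 2m+4$, combining via $p=m+\tfrac52-\sum(\cdot)^r$; the norm is then controlled through Lemma~\ref{lem:poly-prop} (norms of powers and sums of multivariate polynomials) together with $\|T_s\|\leq(1+\sqrt2)^s$. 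You instead fold the slab into two extra halfspaces (which is the same reduction, just made explicit), use a \emph{single} Chebyshev polynomial of degree $n=rs$ with the hyperbolic-cosine lower bound $T_n(1+\gamma)\geq\tfrac12 e^{n\sqrt\gamma}\geq 4N$, combine via $p=1-\tfrac1{2N}\sum_k Q_k$, and bound the norm directly through the dimension-free estimate $\|\fdot{a}{\cdot}^i\|\leq\sqrt{i!}$ plus an $\ell_1$ bound on Chebyshev coefficients (which indeed follows from part~7 of Lemma~\ref{lem:cheby-prop} via Cauchy--Schwarz, $\sum_j|t_{n,j}|\leq\sqrt{n+1}(1+\sqrt2)^n\leq n\,4^n$). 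Both routes give the same degree $rs$, and your constants close with room to spare: $\|Q_k\|\leq n\,12^n\sqrt{n!}\leq n(144n)^{n/2}$ and $\sup_n n(144/420)^{n/2}<1$, so $\|p\|\leq\tfrac92(420rs)^{rs/2}$ holds. What the paper's power trick buys is avoiding any $\cosh/\mathrm{arccosh}$ estimates; what your single-polynomial version buys is bypassing the product/power norm inequalities of Lemma~\ref{lem:poly-prop} and a slightly tighter factorial (rather than $j^j$) growth in the norm accounting. The only steps you leave implicit --- the Chebyshev $\ell_1$ coefficient bound and the final Stirling bookkeeping --- do check out numerically.
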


\begin{proof}[Proof of Theorem~\ref{thm:margin-trans}]

Consider class $i\in[K]$. We will apply
Theorem~\ref{thm:sep-poly}. For $j\in\{1,\ldots,L-1\}$, let
\[
v_j=\left\{
\begin{array}{ll}
  u_{g(i)}-u_j, & \mbox{if $j<g(i)$,} \\
  u_{g(i)}-u_{j+1}, & \mbox{if $j>g(i)$.}
\end{array}
\right.    
\]
Also, let $v'=u'_{g(i)}$, $v_b=\frac{b_i}{\|u'_{g(i)}\|}u'_{g(i)}$
and $v_t=\frac{t_i}{\|u'_{g(i)}\|}u'_{g(i)}$.

From Theorem~\ref{thm:sep-poly}, there exists a multivariate
polynomial $p_i:\R^d\to \R$ such that for all $t\in[T]$ and the
sequence $(x_1,y_1),(x_2,y_2),(x_t,y_t),\ldots,(x_T,y_T)$, we have
\begin{itemize}
\item if $y_t=i$, $p_i(x_t)\geq \frac{1}{2}$, since $x_t\in R^{+}_i
  \cap \hat{R}^{+}_i$, and
\item if $y_t\neq i,$ $p_i(x_t)\leq -\frac{1}{2}$, since $x_t\in
  R^{-}_i \cap \hat{R}^{-}_i$.
\end{itemize}

It is left to check the properties of $p$.
Theorem~\ref{thm:sep-poly} implies that
\[
\|p\|\leq \frac{9}{2}\left[420\lceil\log_2(2L+2)\rceil\cdot\left\lceil\sqrt{\frac{2}{\gamma}}\right\rceil\right]^{\frac{\lceil\log_2(2L+2)\rceil\cdot\left\lceil\sqrt{\frac{2}{\gamma}}\right\rceil}{2}}
\]
By Lemma~\ref{lem:norm-bound}, there exists $c_i\in\ell_2$ such that $\fdot{c_i}{\phi(x)}=p_i(x),$ and
\[
\|c_i\|_{\ell_2}\leq \frac{9}{2}\left[840\lceil\log_2(2L+2)\rceil\cdot\left\lceil\sqrt{\frac{2}{\gamma}}\right\rceil\right]^{\frac{\lceil\log_2(2L+2)\rceil\cdot\left\lceil\sqrt{\frac{2}{\gamma}}\right\rceil}{2}}.
\]

We are ready to construct strongly separable vectors for our group
weakly separable case in $\ell_2$ such that
$\|z_1\|^2+\|z_2\|^2+\ldots+\|z_L\|^2\leq 1$ and for all $t\in [T]$,
$\fdot{z_{y_t}}{x_t} \geq \gamma$, and for all $j\neq y_t$,
$\fdot{z_j}{x_t}\leq -\gamma$, by scaling $c_i$ appropriately as
follows.  We can let
\[
z_i=\frac{c_i}{\sqrt{L}\cdot \frac{9}{2}\left[840\lceil\log_2(2L+2)\rceil\cdot\left\lceil\sqrt{\frac{2}{\gamma}}\right\rceil\right]^{\frac{\lceil\log_2(2L+2)\rceil\cdot\left\lceil\sqrt{\frac{2}{\gamma}}\right\rceil}{2}}},
\]
and
\[
\gamma = \frac{\left[840\lceil\log_2(2L+2)\rceil\cdot\left\lceil\sqrt{\frac{2}{\gamma}}\right\rceil\right]^{-\frac{\lceil\log_2(2L+2)\rceil\cdot\left\lceil\sqrt{\frac{2}{\gamma}}\right\rceil}{2}}}{9\sqrt{L}},
\]
then the theorem follows.    
\end{proof}

\subsection{Separating polynomials}
\label{sect:sep-poly}

This section proves Theorem~\ref{thm:sep-poly}, i.e., we provide a
polynomial $p:\R^d\rightarrow\R$ that separates one class of examples
from the others with degree and norm bounds.

As in~\cite{BeygelzimerPSTWZ2019-separable}
and~\cite{KlivansS2004-halfspaces-margin}, we use the Chebyshev
polynomials~\cite{MasonH2002-chebyshev} $T_n(\cdot)$ defined as
follows.
\begin{align*}
    T_0(z)&=1,\\
    T_1(z)&=z,\\
    T_{n+1}&=2z T_n(z)-T_{n-1}(z) \;\;\text{for $n\geq 1$}
\end{align*}

The following two lemmas are
from~\cite{BeygelzimerPSTWZ2019-separable}.

\begin{lemma}[from Lemma 15 in~\cite{BeygelzimerPSTWZ2019-separable}]
\label{lem:cheby-prop}
(Properties of Chebyshev polynomials) Chebyshev polynomials satisfy
\begin{enumerate}
    \item $deg(T_n) = n$ for all $n \geq 0$.
    \item If $n \geq 1$, the leading coefficient of $T_n(z)$ is $2^{n-1}$.
    \item $T_n(\cos (\theta)) = \cos (n\theta)$ for all $\theta \in \R$ and all $n \geq 0$.
    \item $T_n(\cosh (\theta)) = \cosh (n\theta)$ for all $\theta \in \R$ and all $n \geq 0$.
    \item $|T_n(z)| \leq 1$ for all $z \in [-1, 1]$ and all $n \geq 0$.
    \item $T_n(z) \geq 1 + n^2(z - 1)$ for all $z \geq 1$ and all $n \geq 0$.
    \item $\|T_n\| \leq (1+\sqrt{2})^n$ for all $n \geq 0$.
\end{enumerate}
\end{lemma}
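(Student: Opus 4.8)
The plan is to prove all seven properties by induction on the defining three-term recurrence $T_{n+1}=2zT_n-T_{n-1}$, grouping the statements so that each induction can reuse the ones already established. I would begin with properties 1 and 2 simultaneously. The base cases are immediate: $T_0=1$ has degree $0$, and $T_1=z$ has degree $1$ with leading coefficient $2^0=1$. For the inductive step, if $T_n$ has degree $n$ and leading coefficient $2^{n-1}$, then $2zT_n$ has degree $n+1$ and leading coefficient $2\cdot 2^{n-1}=2^n$; since $\deg(T_{n-1})=n-1<n+1$, subtracting $T_{n-1}$ leaves the top-degree term untouched, so $T_{n+1}$ has degree $n+1$ and leading coefficient $2^n=2^{(n+1)-1}$.

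Next I would prove the trigonometric and hyperbolic identities (properties 3 and 4) by the same induction, using the product-to-sum formulas $2\cos\theta\cos(n\theta)=\cos((n+1)\theta)+\cos((n-1)\theta)$ and $2\cosh\theta\cosh(n\theta)=\cosh((n+1)\theta)+\cosh((n-1)\theta)$. Assuming $T_n(\cos\theta)=\cos(n\theta)$ and $T_{n-1}(\cos\theta)=\cos((n-1)\theta)$, the recurrence gives $T_{n+1}(\cos\theta)=2\cos\theta\cos(n\theta)-\cos((n-1)\theta)=\cos((n+1)\theta)$, and the $\cosh$ case is identical; the base cases $n=0,1$ are direct. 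Property 5 is then an immediate corollary: every $z\in[-1,1]$ can be written as $z=\cos\theta$, whence $|T_n(z)|=|\cos(n\theta)|\leq 1$.

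The main obstacle is property 6, the lower bound $T_n(z)\geq 1+n^2(z-1)$ for $z\geq 1$. Using property 4 I would write $z=\cosh\theta$ with $\theta\geq 0$ and note $T_n(1)=1$ (property 3 at $\theta=0$). The plan is to establish the derivative bound $T_n'(z)\geq n^2$ for all $z\geq 1$ and then integrate from $1$ to $z$ to obtain $T_n(z)-1\geq n^2(z-1)$. For the derivative bound, differentiating $T_n(\cosh\theta)=\cosh(n\theta)$ gives $T_n'(\cosh\theta)\sinh\theta=n\sinh(n\theta)$, so for $\theta>0$ we have $T_n'(\cosh\theta)=n\,\sinh(n\theta)/\sinh\theta$. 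It then suffices to prove $\sinh(n\theta)\geq n\sinh\theta$ for $\theta\geq 0$, which is a short induction: from the addition formula $\sinh((n+1)\theta)=\sinh(n\theta)\cosh\theta+\cosh(n\theta)\sinh\theta$ together with $\cosh\theta\geq 1$ and $\cosh(n\theta)\geq 1$, one gets $\sinh((n+1)\theta)\geq\sinh(n\theta)+\sinh\theta\geq(n+1)\sinh\theta$. Hence $T_n'(\cosh\theta)\geq n^2$ for $\theta>0$, and the case $z=1$ follows by continuity. I expect this to be the delicate step precisely because the claimed slope is $n^2$ rather than $n$, so the inductive bookkeeping must squeeze out the extra factor of $n$ from $\sinh(n\theta)\geq n\sinh\theta$.

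Finally, property 7 follows from a coefficient-norm recurrence. Writing $a_n=\|T_n\|$ for the $\ell_2$ norm of the coefficient vector, multiplication by $z$ merely shifts coefficients and hence preserves this norm, so $\|zT_n\|=\|T_n\|$; applying the triangle inequality to $T_{n+1}=2zT_n-T_{n-1}$ then yields $a_{n+1}\leq 2a_n+a_{n-1}$, with $a_0=a_1=1$. Setting $r=1+\sqrt{2}$, the positive root of $x^2-2x-1=0$, so that $r^2=2r+1$, a straightforward induction closes the bound: assuming $a_k\leq r^k$ for all $k\leq n$, we get $a_{n+1}\leq 2r^n+r^{n-1}=r^{n-1}(2r+1)=r^{n+1}$, while the base cases $a_0=1=r^0$ and $a_1=1\leq r$ hold.
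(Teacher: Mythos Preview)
Your proof is correct. The paper does not supply its own proof of this lemma; it is imported verbatim from Beygelzimer et al.\ (their Lemma~15) and stated without argument, so there is no in-paper proof to compare against. Your inductive treatment of properties 1--5 and 7 is the standard one, and your approach to property 6---differentiating the identity $T_n(\cosh\theta)=\cosh(n\theta)$ to obtain $T_n'(\cosh\theta)=n\,\sinh(n\theta)/\sinh\theta$, bounding $\sinh(n\theta)\geq n\sinh\theta$ by a short induction on the addition formula, and then integrating the resulting derivative bound $T_n'(z)\geq n^2$ from $1$ to $z$---is clean and correct.
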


\begin{lemma}[from Lemma 14 in~\cite{BeygelzimerPSTWZ2019-separable}]
\label{lem:poly-prop}
(Properties of norm of polynomials) 
\begin{enumerate}
    \item Let $p_1,p_2,\ldots,p_n$ be multivariate polynomials and let $p(x)=\prod_{j=1}^n p_j(x)$
    be their product. Then, $\|p\|^2\leq n^{\sum_{j=1}^n deg(p_j)}\prod_{j=1}^n \|p_j\|^2$.
    \item Let $q$ be a multivariate polynomial of degree at most $s$ and let $p(x)=(q(x))^n$. Then,
    $\|p\|^2\leq n^{ns}\|q\|^{2n}$.
    \item Let $p_1,p_2,\ldots,p_n$ be multivariate polynomials. Then, 
    $\left\|\sum_{j=1}^n p_j\right\|^2 \leq n\sum_{j=1}^n \|p_j\|^2.$
\end{enumerate}
\end{lemma}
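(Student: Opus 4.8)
The plan is to follow the Klivans--Servedio / Beygelzimer~\emph{et al.} template: reduce the $m$ halfspace constraints and the two threshold (slab) constraints that define the positive region to a single family of margin halfspaces, approximate each one by a univariate Chebyshev polynomial that is ``small'' on the violating side and ``large'' on the satisfying side, and then glue these blocks into one multivariate polynomial that approximately computes their AND. The degree and norm bounds then follow from Lemma~\ref{lem:cheby-prop} and Lemma~\ref{lem:poly-prop}. Concretely, I would first put every defining constraint of the positive region into the common form ``$\fdot{w}{x}\ge\gamma$ with $\|w\|\le 1$'', so that the relevant linear form lies in $[-1,1]$. The $m$ halfspaces $\fdot{v_i}{x}\ge\gamma$ are already of this form; for the two slab faces I would rewrite $\fdot{x}{v'}\ge\fdot{v_b}{v'}+\gamma$ as $\fdot{v'}{x}-\fdot{v'}{v_b}\ge\gamma$ and likewise $\fdot{v'}{x}\le\fdot{v'}{v_t}-\gamma$, and since $u=\fdot{v'}{x}$ ranges in $[-1,1]$ each face becomes a one-sided margin-$\gamma$ threshold on the single scalar $u$. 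This renders the positive region the intersection of $N=m+2$ margin halfspaces and the negative region the union of the $N$ violating halfspaces, exactly the situation the combining step is built for.

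Next, for each of the $N$ constraints I would build a univariate block as in Lemma~\ref{lem:cheby-prop}. Writing $z\in[-1,1]$ for the constraint's linear form, I apply the affine map sending the violating interval into $[-1,1]$ and the satisfying point $z=\gamma$ to an argument slightly larger than $1$, and set the block to $T_d(\cdot)$ with $d=\lceil\sqrt{2/\gamma}\,\rceil$. By property~5 of Lemma~\ref{lem:cheby-prop} the block stays in $[-1,1]$ on the violating side, while property~6 together with $d=\lceil\sqrt{2/\gamma}\,\rceil$ forces the argument excess $\tfrac{4\gamma}{1-\gamma}$ to yield $T_d\ge 1+d^2\cdot\tfrac{4\gamma}{1-\gamma}\ge 9$ on the satisfying side (the ``$9$'' is precisely the constant surfacing in Theorem~\ref{thm:margin-trans}). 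Thus each block separates ``satisfied'' from ``violated'' by a constant factor, at degree $\lceil\sqrt{2/\gamma}\,\rceil$.

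The crux is the combining step: I must assemble the $N$ blocks into one polynomial $p$ with $p(x)\ge\tfrac12$ when all constraints hold and $p(x)\le-\tfrac12$ when at least one fails, while paying only a factor $\lceil\log_2(2N)\rceil=\lceil\log_2(2m+4)\rceil$ in the degree rather than the naive factor $N$ from a plain product. This is exactly where the $\log_2(2m+4)$ in items~3 and~4 originates: one boosts the per-block separation until the aggregate AND/OR can be read off a linear combination of the blocks (driving each block's contribution within $\tfrac{1}{2(N+1)}$ of its ideal value and then thresholding), and the number of boosting rounds is logarithmic in $N$, multiplying the degree by $\lceil\log_2(2m+4)\rceil$ to give $deg(p)=\lceil\log_2(2m+4)\rceil\cdot\lceil\sqrt{2/\gamma}\,\rceil$, which is item~3. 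Items~1 and~2 are then a direct case check on whether $x$ lies in the positive or negative region, using the constant-factor block separation from the previous paragraph.

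Finally I would bound $\|p\|$. Starting from $\|T_n\|\le(1+\sqrt2)^n$ (property~7 of Lemma~\ref{lem:cheby-prop}) for each block and propagating through the combining step with the product, power, and sum inequalities of Lemma~\ref{lem:poly-prop}, the norm accumulates in the form $(C\cdot deg(p))^{deg(p)/2}$; tracking the constants (the per-block factor, the affine rescalings, and the product/summation overhead) should yield the stated $\tfrac{9}{2}\big[420\lceil\log_2(2m+4)\rceil\lceil\sqrt{2/\gamma}\,\rceil\big]^{deg(p)/2}$, which is item~4. I expect the main obstacle to be this combining step: achieving the clean $\lceil\log_2(2m+4)\rceil$ degree factor while simultaneously keeping the value separation at $\pm\tfrac12$ and the norm within the specific constant $420$ requires carefully balancing the boosting accuracy against degree and norm growth, and correctly folding the two slab faces into the same count (the ``$+4$'' alongside the ``$2m$'' inside the logarithm).
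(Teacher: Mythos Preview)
Your proposal does not address the stated lemma at all. The statement you were asked to prove is Lemma~\ref{lem:poly-prop}, which records three elementary inequalities relating the polynomial norm $\|p\|$ (the $\ell_2$ norm of the coefficient vector) to products, powers, and sums of polynomials. The paper does not prove this lemma; it is quoted from Beygelzimer~\emph{et al.}\ and used as a black box. A proof would consist of direct coefficient manipulations: Cauchy--Schwarz on the coefficient vectors gives part~3, and for parts~1 and~2 one counts how many products of monomials from the factors can collide into a single monomial of the product and applies Cauchy--Schwarz to each such collision.

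What you have written is instead a sketch of Theorem~\ref{thm:sep-poly} (the separating-polynomial construction). The $m$ halfspaces, the two slab faces, the factor $\lceil\log_2(2m+4)\rceil$, and the $\pm\tfrac12$ thresholds you discuss are all features of Theorem~\ref{thm:sep-poly}; none of them appear in Lemma~\ref{lem:poly-prop}. Even read as a plan for Theorem~\ref{thm:sep-poly}, your combining step diverges from the paper: there is no iterative ``boosting'' of per-block accuracy to $\tfrac{1}{2(N+1)}$. The paper simply takes each Chebyshev block $T_s(1-\fdot{v_i}{x})$ (and the two slab blocks $T_s(1-\fdot{x-v_b}{v'}/2)$, $T_s(1-\fdot{v_t-x}{v'}/2)$), raises each to the power $r=\lceil\log_2(2m+4)\rceil$, and sets $p(x)=m+\tfrac52$ minus the sum of these $(m+2)$ $r$-th powers. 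The logarithmic factor enters because $2^r\ge 2m+4$, so a single violated constraint already makes its block $\ge 2m+4$ and overwhelms the remaining $m+1$ terms, each of which is $\ge -1$.
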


Our proof follows the approach in~\cite{BeygelzimerPSTWZ2019-separable}.

\begin{proof}[Proof of Theorem~\ref{thm:sep-poly}]
Let $r=\lceil\log_2(2m+4)\rceil$ and $s=\left\lceil\sqrt{\frac{2}{\gamma}}\right\rceil$.
Define the polynomial $p:\R^d\to \R$ as
\begin{align*}
    p(x) &= m+\frac{5}{2}-\sum_{i=1}^m (T_s(1-\fdot{v_i}{x}))^r 
    -(T_s(1-\fdot{x-v_b}{v'}/2))^r 
    -(T_s(1-\fdot{v_t-x}{v'}/2))^r.
\end{align*}

First, consider the case when
\begin{align*}
  x \in & \left(\bigcap_{i=1}^m \left\{x: \fdot{v_i}{x}\geq \gamma\right\}\right) \cap \left\{x: \fdot{x}{v'} \geq \fdot{v_b}{v'}+\gamma\right\} \cap 
  \left\{x: \fdot{x}{v'} \leq \fdot{v_t}{v'}-\gamma\right\}.
\end{align*}

Note that $\langle v_i, x\rangle \geq \gamma$ for all $i\in [m]$.
Since $\|x\|\leq 1$ and $\|v_i\|\leq 1$, we have $\fdot{v_i}{x} \in [0,1]$; 
thus, $(T_s(1-\fdot{v_i}{x}))^r \in [-1,1]$.
Consider the terms involving $v_b$ and $v_t$.  Since $\|x\|,\|v_b\|,\|v_t\|\leq 1$, we have that $\|x-v_b\|\leq 2$ and $\|v_t-x\|\leq 2$.  This implies that
$1\geq\fdot{x-v_b}{v'}/2\geq\gamma/2$ and $1\geq\fdot{v_t-x}{v'}/2\geq\gamma/2$; hence,
$(T_s(1-\fdot{x-v_b}{v'}/2))^r\in [-1,1]$ and $(T_s(1-\fdot{v_t-x}{v'}/2))^r\in[-1,1]$.
Therefore,
\[
p(x)\geq m+\frac{5}{2}-m-1-1\geq \frac{1}{2}.
\]

Now consider the case when
\begin{align*}
  x  \in & \bigcup_{i=1}^m \left\{x:\fdot{v_i}{x}\leq -\gamma\right\} \cup \left\{x:\fdot{x}{v'} \leq \fdot{v_b}{v'}-\gamma\right\} \cup 
  \left\{x:\fdot{x}{v'} \geq \fdot{v_t}{v'}+\gamma\right\}
\end{align*}
There are two subcases to consider.

{\em Subcase 1:} Suppose that for some $i$, $\fdot{v_i}{x}\leq-\gamma$.  
In this case, $1-\fdot{v_i}{x}\geq 1+\gamma$ and
Lemma~\ref{lem:cheby-prop} (part 6) implies that
\[
  T_s(1-\fdot{v_i}{x}) \geq 1+s^2\gamma \geq 1+2 \geq 2,
\]
and thus, $(T_s(1-\fdot{v_i}{x}))^r\geq 2^r\geq 2m+4$.

Since $T_s(1-\fdot{v_i}{x}))^r \geq -1$ for all $i$, 
$(T_s(1-\fdot{x-v_b}{v'}/2))^r\geq -1$, and $(T_s(1-\fdot{v_t-x}{v'}/2))^r\geq -1$, 
we have that
\begin{align*}
    p(x)&=m+\frac{5}{2}-(T_s(1-\fdot{v_i}{x}))^r 
    -\sum_{j\in [m]j\neq i} (T_s(1-\fdot{v_j}{x}))^r \\
    & \; \; \; \; \; -(T_s(1-\fdot{x-v_b}{v'}/2))^r 
    -(T_s(1-\fdot{v_t-x}{v'}/2))^r \\
    &\leq m+\frac{5}{2}-(2m+4)+(m-1)+2 \leq -\frac{1}{2}.
\end{align*}

{\em Subcase 2:} Consider the other case when for all $i$, $\fdot{v_i}{x} > -\gamma$.
We deal with the case that $\fdot{x}{v'} \leq \fdot{v_b}{v'}-\gamma$.  
The case when $\fdot{x}{v'} \geq \fdot{v_t}{v'}+\gamma$ can be handled similarly.

Since $\fdot{x-v_b}{v'} \leq -\gamma$, we have $1-\fdot{x-v_b}{v'}/2\geq 1 + \gamma/2$.
Lemma~\ref{lem:cheby-prop} (part 6) implies that
\[
  T_s(1-\fdot{x-v_b}{v'}/2)\geq 1+s^2\gamma/2 \geq 1+2/2 \geq 2,
\]
and $(T_s(1-\fdot{x-v_b}{v'}/2))^r\geq 2m+4$.  
Applying the same argument as in Subcase 1, this implies that $p(x)\leq-\frac{1}{2}$.

The degree of $p$ is the maximum degree of the terms $(T_s(1-\fdot{v_i}{x}))^r$, $(T_s(1-\fdot{x-v_b}{v'}/2))^r$, and $(T_s(1-\fdot{v_t-x}{v'}/2))^r$; thus, it is $r\cdot s$.

Finally, we prove the upper bound of norm of $p$.  We first deal with the term 
$T_s(1-\fdot{v_i}{x})$.

Let $f_i(x)=1-\fdot{v_i}{x}$ and $g_i(x)=T_s(1-\fdot{v_i}{x})=T_s(f_i(x))$.
We have
\[
\|f_i\|^2=1+\|v_i\|^2\leq 1+1=2.
\]
Let $T_s(z)=\sum_{j=0}^s c_j z^j$ be the expansion of $s$-th Chebyshev polynomial.
We can bound the term $\|g_i\|^2$ as follows.
\begin{align*}
    \|g_i\|^2  
    &=\left\|\sum_{j=0}^s c_j(f_i)^j\right\|^2 \\
    &\leq (s+1)\sum_{j=0}^s\left\| c_j(f_i)^j\right\|^2 \qquad\qquad\qquad \mbox{(by part 3 of Lemma~\ref{lem:poly-prop})}\\
    &=(s+1)\sum_{j=0}^s c_j^2\left\|(f_i)^j\right\|^2 \\
    &\leq (s+1)\sum_{j=0}^s c_j^2j^j\left\|f_i\right\|^{2j} \qquad\qquad\qquad \mbox{(by part 2 of Lemma~\ref{lem:poly-prop})}\\
    &\leq (s+1)\sum_{j=0}^s c_j^2j^j2^{2j} \\
    &\leq (s+1)s^s2^{2s}\sum_{j=0}^s c_j^2 \\
    &=(s+1)s^s2^{2s}\|T_s\|^2 \\
    &=(s+1)s^s2^{2s}(1+\sqrt{2})^{2s}  \qquad\qquad\qquad \mbox{(by part 7 of Lemma~\ref{lem:cheby-prop})}\\
    &=(s+1)\left(4(1+\sqrt{2})^2s\right)^s \\
    &\leq (8(1+\sqrt{2})^2s)^s \qquad\qquad\qquad\qquad\quad \mbox{(because $s+1\leq2^s$)} \\
    &\leq (47s)^s.
\end{align*}

We now deal with the terms $(T_s(1-\fdot{x-v_b}{v'}/2))^r$, and
$(T_s(1-\fdot{v_t-x}{v'})/2)^r$.

Let $h_b(x)=1-\fdot{x-v_b}{v'}/2$ and $h_t(x)=1-\fdot{v_t-x}{v'}/2$.
Let $q_b(x) = T_s(1-\fdot{x-v_b}{v'}/2) = T_s(h_b(x))$ and
$q_t(x) = T_s(1-\fdot{v_t-x}{v'}/2) = T_s(h_t(x))$.
We have
\[
\|h_b\|^2 \leq \left\|\frac{v'}{2}\right\|^2 + \left(1 + \frac{\|v_b\|\|v'\|}{2}\right)^2 \leq \frac{1}{4} + \left(1 + \frac{1}{2}\right)^2 = \frac{10}{4}\leq 3,
\]
and
\[
\|h_t\|^2 \leq \left\|\frac{v'}{2}\right\|^2 + \left(1 + \frac{\|v_t\|\|v'\|}{2}\right)^2 \leq \frac{1}{4} + \left(1 + \frac{1}{2}\right)^2 = \frac{10}{4}\leq 3,
\]
since
$h_b(x) = \fdot{x}{v'/2} + (1 + \fdot{v_b}{v'/2})$
and
$h_t(x) = -\fdot{x}{v'/2} + (1 - \fdot{v_t}{v'/2})$.

The terms $\|q_b\|^2$ and $\|q_t\|^2$ can be analyzed similarly as $\|g_i\|^2$.  We have that
\begin{align*}
    \|q_b\|^2 
    &=\left\|\sum_{j=0}^s c_j(h_b)^j\right\|^2 \\
    &\leq (s+1)\sum_{j=0}^s c_j^2j^j\left\|h_b\right\|^{2j} \qquad\qquad\qquad \mbox{(by parts 2 and 3 of Lemma~\ref{lem:poly-prop})}\\
    &\leq (s+1)\sum_{j=0}^s c_j^2j^j3^{2j} \\
    &\leq (s+1)s^s3^{2s}\sum_{j=0}^s c_j^2 \\
    &=(s+1)s^s3^{2s}\|T_s\|^2 \\
    &=(s+1)s^s3^{2s}(1+\sqrt{2})^{2s} \qquad\qquad\qquad \mbox{(by part 7 of Lemma~\ref{lem:cheby-prop})}\\
    &=(s+1)\left(9(1+\sqrt{2})^2s\right)^s \\
    &\leq (9(1+\sqrt{2})^2s)^s \\
    &\leq (105s)^s
\end{align*}
and
\begin{align*}
    \|q_t\|^2 
    &=\left\|\sum_{j=0}^s c_j(h_t)^j\right\|^2 \\
    &\leq (105s)^s.
\end{align*}

Finally,
\begin{align*}
    \|p\|&\leq m+\frac{5}{2}+\sum_{i=1}^m\left\|(g_i)^r\right\|
    +\left\|(q_b)^r\right\|+\left\|(q_t)^r\right\| \\
    &=m+\frac{5}{2}+\sum_{i=1}^m\sqrt{\left\|(g_i)^r\right\|^2} 
    +\sqrt{\left\|(q_b)^r\right\|^2} +\sqrt{\left\|(q_t)^r\right\|^2} \\
    &\leq m+\frac{5}{2}+\sum_{i=1}^m\sqrt{r^{rs}\left\|g_i\right\|^{2r}} 
    +\sqrt{r^{rs}\left\|q_b\right\|^{2r}}+\sqrt{r^{rs}\left\|q_t\right\|^{2r}} \\
    &\leq m+\frac{5}{2}+m r^{rs/2}(47s)^{rs/2}+r^{rs/2}(105s)^{rs/2} 
    +r^{rs/2}(105s)^{rs/2} \\
    &\leq m+\frac{5}{2}+(m+2)(105rs)^{rs/2}.
\end{align*}
Using the fact that $m\leq\frac{1}{2}2^r$ and $r,s \geq 1$, we then have
\begin{align*}
    \|p\| &\leq m+\frac{5}{2}+(m+2)(105rs)^{rs/2} \\
        &\leq \frac{1}{2}2^r+\frac{5}{2}+\left( \frac{1}{2}2^r +2\right)(105rs)^{rs/2} \\
        &\leq 2\cdot2^r+\frac{5}{2}\cdot 2^r(105rs)^{rs/2} \\
        &= 2^r\left( 2+\frac{5}{2}\right)(105rs)^{rs/2} \\
        &\leq 4^{rs/2}\cdot\frac{9}{2}(105rs)^{rs/2} \\
        &=\frac{9}{2}(420rs)^{rs/2}.
\end{align*}
Substitutions of $r$ and $s$ finish the proof.
\end{proof}

\section{Experiments}
\label{sect:exp}

While we focus mostly on the theoretical aspect of the problem, we
performed some experiment to visualize the algorithm.

We generated a dataset in $\R^2$ under the group weakly linear separable
condition, with $K=9$ classes and $L=3$ groups with margin $\gamma=0.005$,
shown in Fig~\ref{fig:data}.

\begin{figure}
\centering
\includegraphics[width=0.5\textwidth]{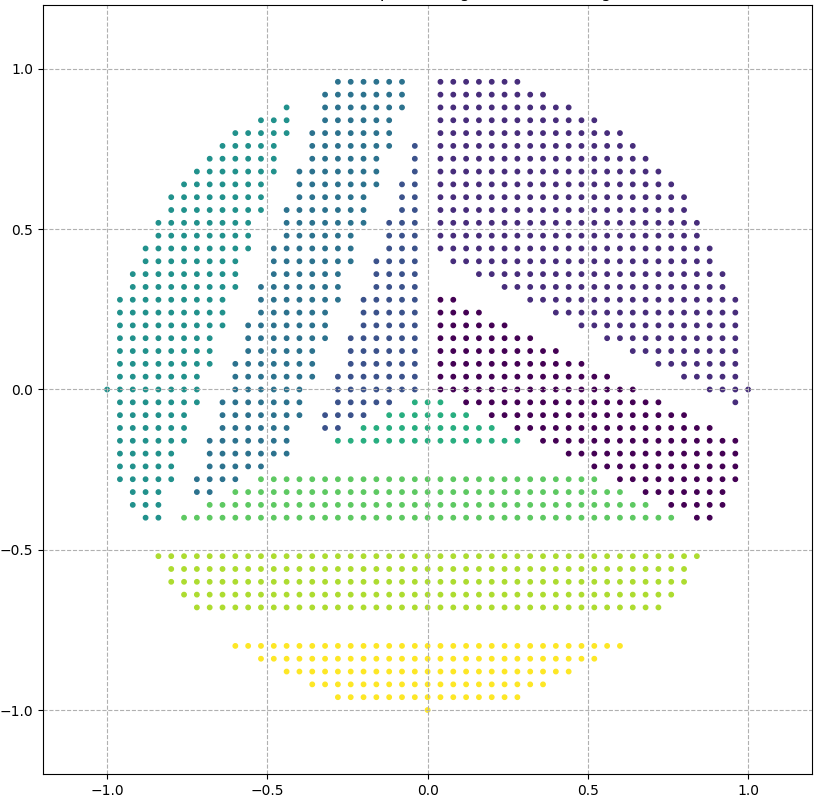}
\caption{Group weakly separable dataset in $\R^2$.}
\label{fig:data}
\end{figure}

We compared two versions of the bandit multiclass perceptron~\cite{BeygelzimerPSTWZ2019-separable},
the standard one and the kernelized one (using the rational kernel).  Since the standard one only works with strongly separable case, it would definitely fail in this experiment, but we used it to give an overall sense of improvement for the kernelized version.  We ran both algorithms for $T=10^6$ steps.  For the kernelized version, we conducted 5 experiments, while the linear one we only ran once.  Fig.~\ref{fig:result} shows the result.  The kernelized version made on average $130,884.6$ mistakes ($13.1\%$), while the standard one made $835,848$ mistakes ($83.6\%$).  Theoretically, the kernelized version should stop making mistakes at some point, but since the number of steps that we ran is too low, we can only see that increasing rate of the number of mistakes decreases over time.

\begin{figure}
\centering
\includegraphics[width=0.5\textwidth]{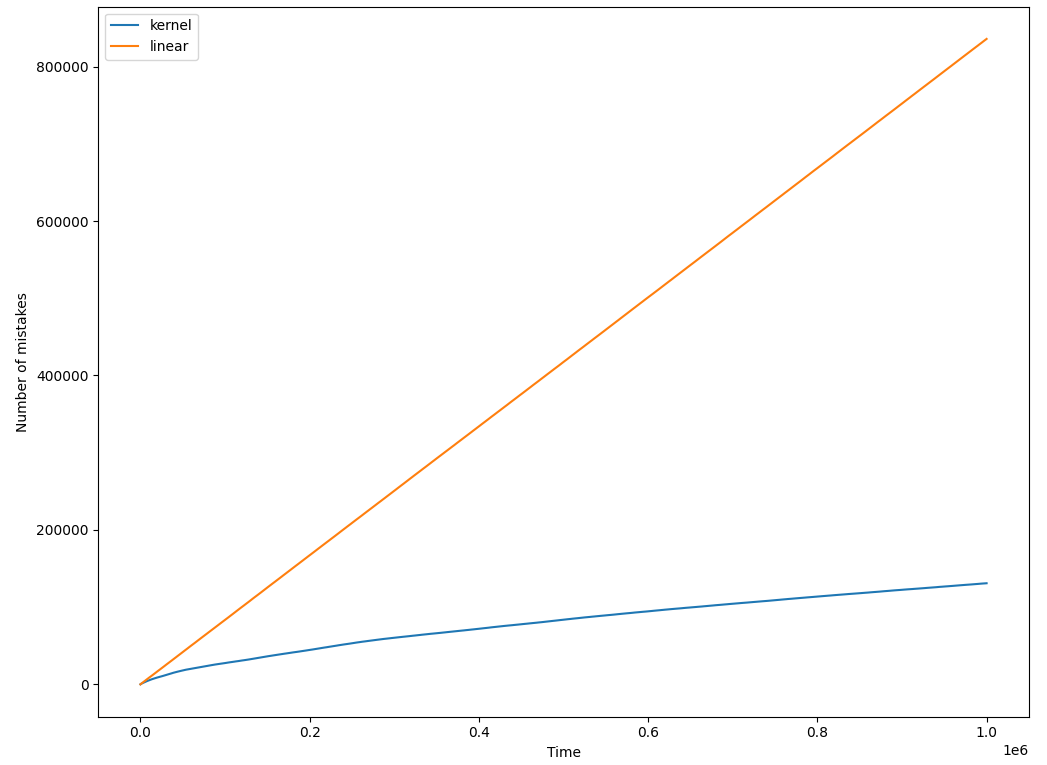}
\caption{
Comparison of the standard algorithm and the kernelized algorithm with $T=10^6$.  
}
\label{fig:result}
\end{figure}

To see the decision boundary, we ploted the contours of the corresponding polynomials for two classes shown in Fig.~\ref{fig:bound_line1} and Fig.~\ref{fig:bound_line2}.  Note that the class in Fig.~\ref{fig:bound_line2} was much harder to learn as its boundary still overlapped with other classes (i.e., mistakes could still be made).


\begin{figure}
\centering
\includegraphics[width=0.5\textwidth]{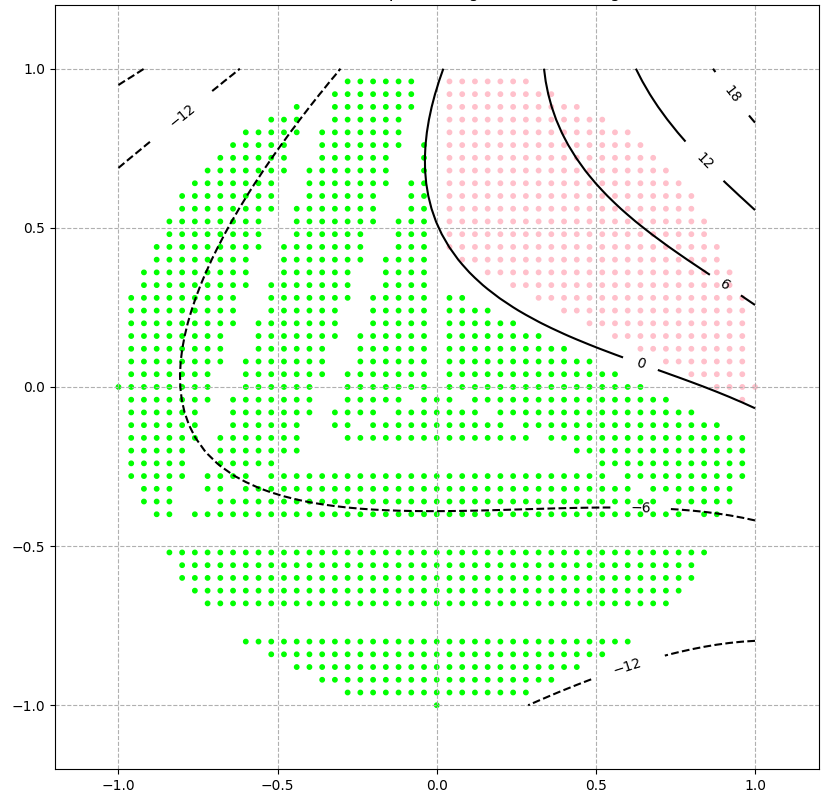}
\caption{
The decision contours of a class (in black) of the kernelized algorithm after $T=10^6$ steps. 
}
\label{fig:bound_line1}
\end{figure}

\begin{figure}
\centering
\includegraphics[width=0.5\textwidth]{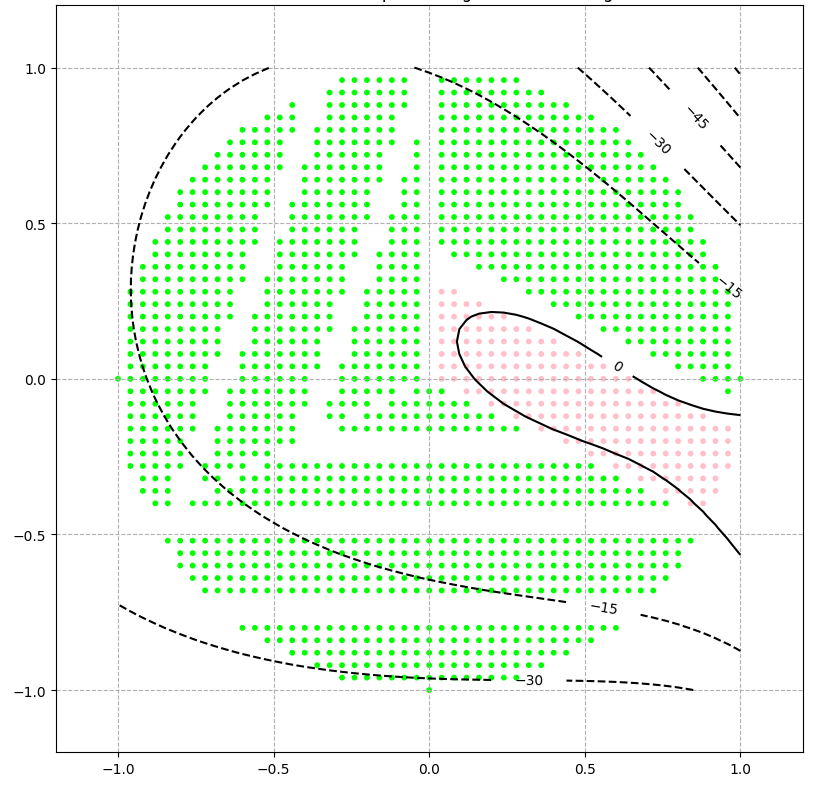}
\caption{
The decision contours of a class (in black) of the kernelized algorithm after $T=10^6$ steps.
}
\label{fig:bound_line2}
\end{figure}

\section{Acknowledgements}

We thank Sanparith Marukatat for insightful comments and for pointing out our calculation errors.  We also thank Thanawin Rakthanmanon for useful comments.

Funding: Both authors are supported by the Thailand Research Fund,
Grant RSA-6180074.

\bibliographystyle{elsarticle-num}
\bibliography{ms}

\begin{thebibliography}{1}
\expandafter\ifx\csname url\endcsname\relax
  \def\url#1{\texttt{#1}}\fi
\expandafter\ifx\csname urlprefix\endcsname\relax\def\urlprefix{URL }\fi
\expandafter\ifx\csname href\endcsname\relax
  \def\href#1#2{#2} \def\path#1{#1}\fi

\bibitem{BeygelzimerPSTWZ2019-separable}
A.~Beygelzimer, D.~Pal, B.~Szorenyi, D.~Thiruvenkatachari, C.-Y. Wei, C.~Zhang,
  \href{http://proceedings.mlr.press/v97/beygelzimer19a.html}{Bandit multiclass
  linear classification: Efficient algorithms for the separable case}, in:
  K.~Chaudhuri, R.~Salakhutdinov (Eds.), Proceedings of the 36th International
  Conference on Machine Learning, Vol.~97 of Proceedings of Machine Learning
  Research, PMLR, Long Beach, California, USA, 2019, pp. 624--633.
\newline\urlprefix\url{http://proceedings.mlr.press/v97/beygelzimer19a.html}

\bibitem{CrammerS2003-ultraconservative}
K.~Crammer, Y.~Singer, Ultraconservative online algorithms for multiclass
  problems, J. Mach. Learn. Res. 3 (2003) 951--991.
\newblock \href {https://doi.org/10.1162/jmlr.2003.3.4-5.951}
  {\path{doi:10.1162/jmlr.2003.3.4-5.951}}.

\bibitem{KakadeST2008-bandit-online-multiclass}
S.~M. Kakade, S.~Shalev-Shwartz, A.~Tewari, Efficient bandit algorithms for
  online multiclass prediction, in: Proceedings of the 25th International
  Conference on Machine Learning, ICML '08, ACM, New York, NY, USA, 2008, pp.
  440--447.
\newblock \href {https://doi.org/10.1145/1390156.1390212}
  {\path{doi:10.1145/1390156.1390212}}.

\bibitem{KlivansS2004-halfspaces-margin}
A.~R. Klivans, R.~A. Servedio, Learning intersections of halfspaces with a
  margin, Journal of Computer and System Sciences 74~(1) (2008) 35 -- 48,
  learning Theory 2004.
\newblock \href {https://doi.org/https://doi.org/10.1016/j.jcss.2007.04.012}
  {\path{doi:https://doi.org/10.1016/j.jcss.2007.04.012}}.

\bibitem{BeigelRS1995}
R.~Beigel, N.~Reingold, D.~Spielman, Pp is closed under intersection, Journal
  of Computer and System Sciences 50~(2) (1995) 191 -- 202.
\newblock \href {https://doi.org/https://doi.org/10.1006/jcss.1995.1017}
  {\path{doi:https://doi.org/10.1006/jcss.1995.1017}}.

\bibitem{ChenCZCZ2009-beyond-banditron}
G.~Chen, G.~Chen, J.~Zhang, S.~Chen, C.~Zhang, Beyond banditron: {A}
  conservative and efficient reduction for online multiclass prediction with
  bandit setting model, in: {ICDM} 2009, The Ninth {IEEE} International
  Conference on Data Mining, Miami, Florida, USA, 6-9 December 2009, 2009, pp.
  71--80.
\newblock \href {https://doi.org/10.1109/ICDM.2009.36}
  {\path{doi:10.1109/ICDM.2009.36}}.

\bibitem{ShaweTaylorC2004-kernel-methods}
J.~Shawe-Taylor, N.~Cristianini, Kernel Methods for Pattern Analysis, Cambridge
  University Press, New York, NY, USA, 2004.

\bibitem{ShalevShwartzSS2011-kernel-based}
S.~Shalev{-}Shwartz, O.~Shamir, K.~Sridharan,
  \href{https://doi.org/10.1137/100806126}{Learning kernel-based halfspaces
  with the 0-1 loss}, {SIAM} J. Comput. 40~(6) (2011) 1623--1646.
\newblock \href {https://doi.org/10.1137/100806126}
  {\path{doi:10.1137/100806126}}.
\newline\urlprefix\url{https://doi.org/10.1137/100806126}

\bibitem{MasonH2002-chebyshev}
J.~Mason, D.~Handscomb, Chebyshev Polynomials, CRC Press, 2002.

\end{thebibliography}

\end{document}